\newtheorem{theorem}{Theorem}
\newtheorem{lemma}{Lemma}
\newtheorem{proposition}{Proposition}
\theoremstyle{definition}
\newtheorem{assumption}{Assumption}
\title{Value Propagation for Decentralized Networked Deep Multi-agent  Reinforcement Learning}
\author[      1]{Chao Qu \thanks{luoji.qc@antfin.com}}
\author[2]{Shie Mannor}
\author[3,4]{Huan Xu}
\author[1]{Yuan Qi}
\author[1,4]{Le Song}
\author[1]{Junwu Xiong}
\affil{Ant Financial Services Group}\affil[2]{ Technion}\affil[3]{Alibaba Group}\affil[4]{Georgia Institute of Technology}
\begin{document}

\maketitle

\begin{abstract}
 We consider the networked multi-agent reinforcement learning (MARL) problem in a fully decentralized setting, where agents learn to coordinate to achieve joint success. This problem is widely encountered in many areas including  traffic control, distributed control, and smart grids. 
 We assume each agent is located at a node of a communication network and can  exchange information only with its neighbors.  Using  softmax temporal consistency,  we derive a primal-dual decentralized optimization method and obtain a principled and data-efficient iterative algorithm named {\em value propagation}.  We prove a non-asymptotic convergence rate of $\mathcal{O}(1/T)$ with  nonlinear function approximation. To the best of our knowledge, it is the first MARL algorithm with a convergence guarantee in the control, off-policy, non-linear function approximation, fully decentralized setting. 
\end{abstract}

\section{Introduction}

\setlength{\abovedisplayskip}{4pt}
\setlength{\abovedisplayshortskip}{1pt}
\setlength{\belowdisplayskip}{4pt}
\setlength{\belowdisplayshortskip}{1pt}
\setlength{\jot}{3pt}
\setlength{\textfloatsep}{6pt}	

Multi-agent systems have applications in a wide range of areas  such as robotics, traffic control, distributed control, telecommunications, and economics. For these areas, it is often difficult or simply impossible to predefine agents' behaviour to achieve satisfactory results, and  
multi-agent reinforcement learning (MARL) naturally arises~\citep{bu2008comprehensive,tan1993multi}. 
For example,  \citet{el2013multiagent} model a traffic signal control problem as a multi-player stochastic game and solve it with MARL.
MARL generalizes reinforcement learning by considering a set of   agents (decision makers) sharing a common environment.  However, multi-agent reinforcement learning is a challenging problem since the agents interact with both the environment and each other. For instance, independent Q-learning---treating other agents as a part of the environment---often fails as the multi-agent setting breaks the theoretical convergence guarantee of Q-learning and makes the learning process unstable \citep{tan1993multi}. \citet{rashid2018qmix,foerster2018counterfactual,lowe2017multi} alleviate such a problem using a centralized network (i.e., being centralized for training, but decentralized during execution.). Its communication pattern is illustrated in the left panel of Figure \ref{fig:network}.

Despite the great success of (partially) centralized MARL approaches, there are various scenarios, such as sensor networks \citep{rabbat2004distributed} and intelligent transportation systems \citep{adler2002cooperative} , where a central agent does not exist or may be too expensive to use. In addition, \textit{privacy} and \textit{security} are requirements of many real world problems in multi-agent system (also in many modern machine learning problems) \citep{abadi2016deep,kurakin2016adversarial}  . For instance, in Federated learning \citep{mcmahan2016communication}, the learning task is solved by a lose federation of participating devices (agents) without the need to centrally store the data, which significantly reduces privacy and security risk by limiting the attack surface to only the device. In the agreement problem \citep{degroot1974reaching,mo2017privacy}, a group of agents may want to reach consensus on a subject without leaking their \textit{individual goal} or \textit{opinion} to others. Obviously, centralized MARL violates privacy and security requirements. To this end, we and others have advocated the \textit{fully decentralized} approaches, which are useful for many applications including unmanned
vehicles \citep{fax2002information}, power grid \citep{callaway2011achieving}, and sensor networks \citep{cortes2004coverage}.  
For these approaches, we can use a network to model the interactions between agents (see the right panel of Figure \ref{fig:network}). Particularly, We consider a \emph{fully cooperative} setting where each agent makes its own decision based on  its \textit{local reward} and messages received from their neighbors.  Thus each agent preserves the \textit{privacy} of its own \textit{goal} and \textit{policy}. At the same time, through the message-passing all agents achieve consensus to maximize the averaged cumulative rewards over all agents; see Equation \eqref{equ:objective}.  

\begin{wrapfigure}{r}{0.5\textwidth}
	\begin{center}
		\includegraphics[width=0.48\textwidth]{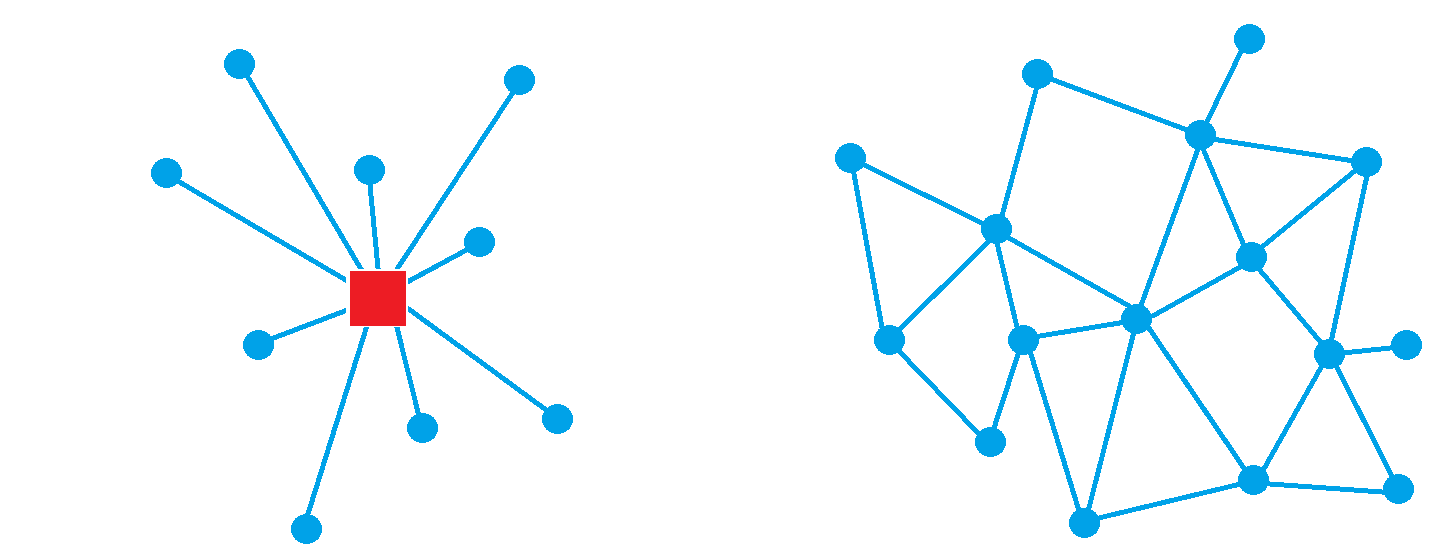}
	\end{center}
	\caption{Centralized network vs Decentralized network. Each blue node in the figure corresponds to an agent. In centralized network (left), the red central node collects information for all agents, while in decentralized network (right), agents exchanges information with neighbors.}
	\label{fig:network}
\end{wrapfigure}
In this paper, we propose a new fully decentralized networked multi-agent deep reinforcement learning algorithm. 
Using  softmax temporal consistency \citep{nachum2017bridging,dai2018sbeed}  
to connect value and policy updates, 
 we derive a new two-step primal-dual decentralized reinforcement learning algorithm inspired by a primal decentralized optimization method \citep{hong2017prox} \footnote{The objective in \cite{hong2017prox} is a primal optimization problem with constraint. Thus they introduce a Lagrange multiplier like method to solve it (so they call it primal-dual method ). Our objective function is  a primal-dual optimization problem with constraint.  }.
In the first step of each iteration, each agent computes its local policy, value gradients and dual gradients and then updates only policy parameters. In the second step, each  agent propagates  to its neighbors the messages based on its value function (and dual function) and then updates its own value function. Hence we name the algorithm \emph{value propagation}.  It preserves the privacy in the sense that no individual reward function is required for the network-wide collaboration. 
 We approximate the local policy, value function and dual function of each agent by deep neural networks, which enables automatic feature generation and end-to-end learning.

\textbf{Contributions:}
\textbf{[1]} We propose the value propagation algorithm and prove that it converges with the rate $\mathcal{O} (1/T)$ even with the \emph{non-linear} deep neural network function approximation.  To the best of our knowledge, it is the first deep MARL algorithm with non-asymptotic  convergence guarantee. At the same time, value propagation can use off-policy updates, making it data efficient. When it reduces to the single agent case, it provides a  proof of \citep{dai2018sbeed} in the realistic setting; see remarks of algorithm \ref{alg:VP} in Section \ref{section:algorithm_VP}.
\textbf{[2]} The objective function in our problem is a primal-dual decentralized optimization form (see \eqref{equ:final_objective_VP}), while the objective function in \citep{hong2017prox} is a primal problem. When our method reduces to pure primal analysis, we extend \citep{hong2017prox} to the stochastic and biased gradient setting which may be of independent interest to the optimization community. In the practical implementation, we extend ADAM into the decentralized setting to accelerate training.

\section{Preliminaries}

\textbf{MDP}~ Markov Decision Process (MDP) can be described by  a 5-tuple ($\mathcal{S}, \mathcal{A}, \mathcal{R}, \mathcal{P}, \gamma$): $\mathcal{S}$ is the finite state space, $\mathcal{A}$ is the finite action space, $\mathcal{P}=(P(s'|s,a))_{s,s'\in \mathcal{S},a\in \mathcal{A}}$ are the transition probabilities, $R=(R(s,a))_{s,s'\in \mathcal{S},a\in \mathcal{A}}$ are the real-valued immediate rewards and $\gamma\in (0,1)$ is the discount factor. A policy is used to select actions in the MDP. In general, the policy is stochastic and denoted by $\pi$, where $\pi(s_t,a_t)$ is the conditional probability density at $a_t$ associated with the policy. Define $V^*(s)=\max_{\pi} \mathbb{E} [\sum_{t=0}^{\infty} \gamma^t R(s_t,a_t)| s_0=s]$ to be the optimal value function. It is known that $V^*$ is the unique fixed point of the Bellman optimality operator,
$V(s)=(\mathcal{T}V)(s):=\max_a R(s,a)+\gamma \mathbb{E}_{s'|s,a}[V(s')]. $
The optimal policy $\pi^*$ is related to $V^*$ by the following equation: $\pi^{*} (s,a)=\arg\max_{a} \{ R(s,a)+\gamma \mathbb{E}_{s'|s,a} V^{*}(s')  \} $

\textbf{Softmax Temporal Consistency}~ \citet{nachum2017bridging} establish a connection between value and policy based reinforcement learning based on a relationship between softmax temporal value consistency and policy optimality under entropy regularization. Particularly, the soft Bellman optimality is as follows,
\begin{equation}\label{equ:soft_bellman}
V_{\lambda} (s)=
\max_{\pi(s,\cdot)} \big( \mathbb{E}_{a\sim \pi(s,\cdot)} (R(s,a)+\\
\gamma \mathbb{E}_{s'|s,a} V_{\lambda}(s') )+\lambda H(\pi, s)  \big),
\end{equation}
where $H(\pi,s)=-\sum_{a\in \mathcal{A}}\pi(s,a)\log\pi(s,a)$ and $\lambda\geq 0$ controls the degree of regularization. When $\lambda=0$, above equation reduces to the standard Bellman optimality condition. An important property of  soft Bellman optimality  is the called temporal consistency, which leads to the Path Consistency Learning.


\begin{proposition} {\citep{nachum2017bridging}\label{prop:temporal_consistency}.}
	Assume $\lambda>0$. Let $V^{*}_{\lambda}$ be the fixed point of \eqref{equ:soft_bellman} and $\pi^{*}_\lambda$ be the corresponding policy that attains that maximum on the RHS of \eqref{equ:soft_bellman}. Then, $(V^*_{\lambda},\pi_\lambda^*)$ is the unique $(V,\pi)$ pair that satisfies the following equation for all $(s,a)\in \mathcal{S}\times \mathcal{A}:$	
	$ V(s)=R(s,a)+\gamma\mathbb{E}_{s'|s,a} V(s')-\lambda \log \pi(s,a). $
\end{proposition}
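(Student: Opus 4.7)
My plan is to establish the claim in two phases: first produce the pair $(V^*_\lambda,\pi^*_\lambda)$ as a solution of the temporal consistency equation, then show that any solution must coincide with it.

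For existence, I would start from the soft Bellman equation \eqref{equ:soft_bellman} and solve the inner maximization in closed form. Maximizing $\mathbb{E}_{a\sim\pi(s,\cdot)}[Q(s,a)] + \lambda H(\pi,s)$, where $Q(s,a) := R(s,a) + \gamma\mathbb{E}_{s'|s,a} V^*_\lambda(s')$, over the probability simplex yields the Boltzmann policy
\begin{equation*}
\pi^*_\lambda(s,a) = \frac{\exp(Q(s,a)/\lambda)}{\sum_{a'}\exp(Q(s,a')/\lambda)},
\end{equation*}
either by Lagrange multipliers or by the standard Gibbs variational formula. Substituting back gives $V^*_\lambda(s) = \lambda\log\sum_{a'}\exp(Q(s,a')/\lambda)$, and taking logarithms of the formula for $\pi^*_\lambda(s,a)$ yields
\begin{equation*}
\lambda\log\pi^*_\lambda(s,a) = Q(s,a) - V^*_\lambda(s),
\end{equation*}
which, after rearranging, is precisely the claimed identity $V^*_\lambda(s) = R(s,a) + \gamma\mathbb{E}_{s'|s,a}V^*_\lambda(s') - \lambda\log\pi^*_\lambda(s,a)$ valid for every $(s,a)$.

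For uniqueness, suppose $(V,\pi)$ is any pair satisfying the temporal consistency equation pointwise. Solving for $\pi(s,a)$ gives
\begin{equation*}
\pi(s,a) = \exp\Bigl((R(s,a)+\gamma\mathbb{E}_{s'|s,a}V(s') - V(s))/\lambda\Bigr).
\end{equation*}
Since $\pi(s,\cdot)$ must sum to one on $\mathcal{A}$, I would then exponentiate and sum over $a$ to obtain
\begin{equation*}
V(s) = \lambda\log\sum_{a}\exp\Bigl((R(s,a)+\gamma\mathbb{E}_{s'|s,a}V(s'))/\lambda\Bigr),
\end{equation*}
which shows that $V$ is a fixed point of the soft Bellman (log-sum-exp) operator $\mathcal{T}_\lambda$. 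The standard argument—that $\mathcal{T}_\lambda$ is a $\gamma$-contraction in $\|\cdot\|_\infty$ because the log-sum-exp mapping is $1$-Lipschitz in that norm and $\gamma\mathbb{E}_{s'|s,a}$ contributes the factor $\gamma$—then forces $V=V^*_\lambda$ by Banach's fixed point theorem. Plugging $V=V^*_\lambda$ back into the displayed formula for $\pi$ reproduces $\pi^*_\lambda$, which closes the uniqueness argument.

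The main technical point to execute carefully is the contraction step: I need to verify that the log-sum-exp operator acting over a finite action space is non-expansive in sup-norm (a short computation bounding $|\log\sum_a e^{x_a} - \log\sum_a e^{y_a}|$ by $\max_a|x_a-y_a|$) so that composition with $\gamma\mathbb{E}_{s'|s,a}[\cdot]$ is a strict contraction with modulus $\gamma<1$. Everything else is algebraic manipulation of the entropy-regularized argmax, and the positivity of $\pi^*_\lambda$ (ensured by $\lambda>0$) makes taking the logarithm legitimate at every $(s,a)$.
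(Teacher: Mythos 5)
Your proof is correct: the closed-form entropy-regularized maximizer (Boltzmann policy plus log-sum-exp value) gives existence, and reducing any consistent pair $(V,\pi)$ to a fixed point of the soft Bellman operator, which is a $\gamma$-contraction in $\|\cdot\|_\infty$, gives uniqueness. The paper itself supplies no proof of this proposition—it is imported from \citet{nachum2017bridging}—and your route is essentially the standard argument used there, so nothing is missing beyond carrying out the short Lipschitz computation for log-sum-exp that you already flag.
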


A straightforward way to apply temporal consistency is to optimize the following problem, $\min_{V, \pi} E_{s,a}\big( R(s,a)+\gamma \mathbb{E}_{s'|s,a} V(s')-\lambda \log\pi(s,a)-V(s) \big)^2.$
\citet{dai2018sbeed} get around the double sampling problem of above formulation by introduce a primal-dual form
\begin{equation}
\min_{V,\pi}\max_{\rho}\mathbb{E}_{s,a,s'} [ (\delta(s,a,s')-V(s))^2]-\eta \mathbb{E}_{s,a,s'}[(\delta (s,a,s')-\rho(s,a))^2],
\end{equation}
where $\delta(s,a,s')=R(s,a)+\gamma V(s')-\lambda \log \pi(s,a)$, $0\leq \eta\leq 1$ controls the trade-off between bias and variance.

In the following discussion, we use  $\|\cdot\|$ to denote the Euclidean norm over the vector, $A'$ stands for the transpose of $A$, and $\odot$ denotes the entry-wise product between two vectors.

\section{Value Propagation }\label{section:VP}

In this section, we present our multi-agent reinforcement learning algorithm, i.e., value propagation. To begin with, we extend the MDP model to the Networked Multi-agent MDP model following the definition in \citep{zhang2018fully}.  Let $\mathcal{G}=(\mathcal{N},\mathcal{E})$ be an undirected graph with $|\mathcal{N}|=N$ agents (node). $\mathcal{E}$ represents the set of edges. $(i,j)\in \mathcal{E}$ means agent $i$ and $j$ can communicate with each other through this edge.  A networked multi-agent MDP is characterized by a tuple $(\mathcal{S}, \{\mathcal{A}^i\}_{i\in \mathcal{N}}, \mathcal{P}, \{R^i\}_{i\in \mathcal{N}}, \mathcal{G}, \gamma)$: $\mathcal{S}$ is the global state space shared by all agents (It could be partially observed, i.e., each agent observes its own state $S^i$, see our experiment). $\mathcal{A}^i$ is the action space of agent $i$,  $\mathcal{A}=\prod_{i=1}^{N}\mathcal{A}^i$ is the joint action space, $\mathcal{P}$ is the transition probability, $\mathcal{R}^i$ denotes the \textit{local} reward function of agent $i$. We assume rewards are observed only locally to preserve the privacy of the each agent's goal. At each time step, agents observe $s_t$ and make the decision $a^t=(a_1^t, a_2^t,..., a_N^t)$. Then each agent just receives its own reward $R_i(s_t,a_t)$,  and the environment  switches to the new state $s_{t+1}$ according to the transition probability. Furthermore, since each agent make the decisions independently, it is reasonable to assume that the policy $\pi(s,a)$ can be factorized, i.e.,  $\pi (s,a)=\prod_{i=1}^{N} \pi^{i}(s,a^i)$ \citep{zhang2018fully}. We call our method \textit{fully-decentralized} method, since  reward is received locally, the action is executed locally by agent, critic (value function) are trained locally.
\subsection{Multi-Agent Softmax Temporal Consistency}

The goal of the agents is to learn a policy that maximizes the long-term reward averaged over the agent, i.e.,
\begin{equation}\label{equ:objective}
\mathbb{E}\sum_{t=0}^{\infty} \frac{1}{N}\sum_{i=1}^{N}  \gamma^t R_i(s_t,a_t).
\end{equation}
In the following, we adapt the temporal consistency into the multi-agent version. Let $V_{\lambda}(s)= \mathbb{E}\big(\frac{1}{N}\sum_{i=1}^{N} R_i(s,a)+\gamma \mathbb{E}_{s'|s,a} V_{\lambda} (s') +\lambda H(\pi,s)\big),$ $V^*_{\lambda}$ be the optimal value function and $\pi_\lambda^*(s,a)=\prod_{i=1}^{N} \pi_\lambda^{i*} (s,a^i)$ be the corresponding policy. Apply the soft temporal consistency, we obtain that for all $(s,a)\in \mathcal{S}\times \mathcal{A}$, $(V^*_\lambda,\pi^*_\lambda)$ is the unique $(V,\pi)$ pair that satisfies
\begin{equation}\label{equ:multi_temporal_consistency}
\begin{split}
V(s)=\frac{1}{N}\sum_{i=1}^{N} R_i(s,a) +\gamma \mathbb{E}_{s'|s,a} V(s')-\lambda \sum_{i=1}^{N}\log \pi^i(s,a^i).
\end{split}
\end{equation}
A  optimization problem inspired by \eqref{equ:multi_temporal_consistency} would be 
\begin{equation}\label{equ:multi_PCL_primal}
\min_{ \{\pi^i\}_{i=1}^{N}, V}\mathbb{E} \big(  V(s)-  \frac{1}{N} \sum_{i=1}^{N}R_i(s,a)-\gamma \mathbb{E}_{s'|s,a}  V (s')
+\lambda \sum_{i=1}^{N} \log \pi^i (s,a^i) \big)^2.   
\end{equation}

There are two potential issues of above formulation:
First, due to the inner conditional expectation, it would require two independent samples to obtain the unbiased estimation of gradient of $V$ \citep{dann2014policy}.
Second, $V (s)$ is a global variable over the network, thus can not be updated in a \textit{decentralized} way. 

For the first issue, we introduce the primal-dual form of \eqref{equ:multi_PCL_primal} as that in \citep{dai2018sbeed}. Using the fact that $x^2=\max_\nu (2\nu x- \nu^2)$ and the interchangeability principle \citep{shapiro2009lectures} we have,
\begin{equation*}
\min_{ V,  \{\pi^i\}_{i=1}^{N} }\max_{\nu} 2\mathbb{E}_{s,a,s'} [\nu(s,a) \big( \frac{1}{N}\sum_{i=1}^{N} ( R_i(s,a)+ \gamma V(s') -V(s)
-\lambda N\log\pi^i(s,a^i) \big)]-\mathbb{E}_{s,a,s}[\nu^2(s,a)].
\end{equation*}

Change the variable $\nu(s,a)=\rho(s,a)-V(s)$, the objective function becomes
\begin{equation}\label{equ:multi_objective_central_first}
\min_{V, \{  \pi^i\}_{i=1}^{N}}\max_{\rho} \mathbb{E}_{s,a,s'}[\big(\frac{1}{N} \sum_{i=1}^{N} (\delta_i(s,a,s')-V(s)) \big)^2] 
-\mathbb{E}_{s,a,s'}[ \big( \frac{1}{N}\sum_{i=1}^{N} (\delta_i(s,a,s')-\rho(s,a))\big)^2  ],  
\end{equation}
where $\delta_i= R_i(s,a)+\gamma V(s')-\lambda N \log\pi^i(s,a^i).$
\subsection{Decentralized Formulation} 

So far the problem is still in a centralized form, and we now turn to  reformulating it in a decentralized way. We assume that policy, value function, dual variable $\rho$ are all in the parametric function class. Particularly, each agent's policy is $\pi^i(s,a^i):=\pi_{\theta
	_{\pi^i}}(s,a^i)$ and $\pi_{\theta}(s,a) = \prod_{i=1}^{N} \pi_{ \theta_{\pi^i}}(s,a^i).$ The value function  $ V_{\theta_v}(s)$ is characterized by the parameter $\theta_v$, while   $\theta_{\rho}$ represents the parameter of $\rho(s,a)$. Similar to \citep{dai2018sbeed}, we optimize a slightly different version from \eqref{equ:multi_objective_central_first}.
\begin{align}\label{equ:multi_objective_central}
\min_{\theta_v, \{  \theta_{\pi^i} \}_{i=1}^{N}}\max_{\theta_\rho} \mathbb{E}_{s,a,s'}[\big(\frac{1}{N} \sum_{i=1}^{N} (\delta_i(s,a,s')-V(s)) \big)^2]-\eta\mathbb{E}_{s,a,s'}[ \big( \frac{1}{N}\sum_{i=1}^{N} (\delta_i(s,a,s')-\rho(s,a))\big)^2  ],  
\end{align}
where $0\leq\eta\leq 1$ controls the bias and variance  trade-off. When $\eta=0$, it reduces to the pure primal form. 

We now consider the second issue that $V(s)$ is a global variable. To address this problem, we introduce the local copy of the value function, i.e., $V_i(s)$ for each agent $i$. In the algorithm, we have a consensus update step, such that these local copies are the same, i.e., $V_1(s)=V_2(s)=...=V_N(s)= V(s)$, or equivalently  $ \theta_{v^1}=\theta_{v^2}=...=\theta_{v^N}$, where $\theta_{v^i}$ are parameter of $V_i$ respectively. Notice now in \eqref{equ:multi_objective_central}, there is a global dual variable $\rho$ in the primal-dual form. Therefore, we also introduce the local copy of the dual variable, i.e.,  $\rho_i(s,a)$ to formulate it into the decentralized optimization problem.  Now the \textit{final} objective function we need to optimize is
\begin{align}\label{equ:final_objective_VP}
\min_{ \{ \theta_{v^i}, \theta_{\pi^i} \}_{i=1}^{N} }&\max_{\{\theta_{\rho^i} \}_{i=1}^{N}} L(\theta_V,\theta_\pi,\theta_\rho)= \mathbb{E}_{s,a,s'}[\big(\frac{1}{N}\sum_{i=1}^{N} (\delta_i(s,a,s')-V_{i}(s)) \big)^2] \nonumber \\
&\hspace{5cm}-\eta\mathbb{E}_{s,a,s'}[ \big( \frac{1}{N}\sum_{i=1}^{N} (\delta_i(s,a,s')-\rho_{i}(s,a))\big)^2  ], \nonumber \\
&s.t. ~~ \theta_{v^1}=,...,=\theta_{v^N}, \theta_{\rho^1}=,...,=\theta_{\rho^N},  
\end{align}
where $\delta_i= R_i(s,a)+\gamma V_{i}(s')-\lambda N \log\pi^i(s,a^i).$  We are now ready to present the value propagation algorithm. In the following, for notational simplicity, we assume the parameter of each agent is a scalar, i.e., $\theta_{\rho^i}, \theta_{\pi^i}, \theta_{v^i} \in R$.  We pack the parameter together and slightly abuse the notation by writing $\theta_\rho=[\theta_{\rho^1},...,\theta_{\rho^N}]'$, $\theta_\pi=[\theta_{\pi^1},..., \theta_{\pi^N}]'$, $\theta_{V}=[\theta_{v^1},...,\theta_{v^N}]'.$ Similarly, we also pack the stochastic gradient $ g(\theta_{\rho})=[ g(\theta_{\rho^1}),...,g(\theta_{\rho^n}) ]' $, $ g(\theta_{V})=[ g(\theta_{v^1}),...,g(\theta_{v^n}) ]' $.

\subsection{Value propagation algorithm}\label{section:algorithm_VP}

 Solving \eqref{equ:final_objective_VP} even without constraints is not an easy problem when both primal and dual parts are approximated by the deep neural networks.   An ideal way is to optimize the inner dual problem and find the solution 
$ \theta_\rho^*=\arg\max_{\theta_\rho} L(\theta_V, \theta_{\pi}, \theta_\rho)$, such that $\theta_{\rho^1}=...=\theta_{\rho^N} $. Then we can do the (decentralized) stochastic gradient decent to solve the primal problem.
\begin{equation}
\min_{\{\theta_{v^i},\theta_{\pi^i}\}_{i=1}^{N}} L(\theta_V, \theta_{\pi}, \theta_{\rho}^*) \\
~~s.t.~~ \theta_{v^1}=...=\theta_{v^N}.
\end{equation}
However in practice, one \textit{tricky issue} is that  we can not get  the exact solution $\theta^*_\rho$ of the dual problem. Thus, we do the (decentralized) stochastic gradient for $T_{dual}$ steps in the dual problem and get an approximated solution $\tilde{\theta}_\rho$ in the Algorithm \ref{alg:VP}. In our analysis, we take the error $\varepsilon$ generated from this inexact solution into the consideration and analyze its effect on the convergence. Particularly, since $ \nabla_{\theta_{V}} L(\theta_V,\theta_\pi,\tilde{\theta}_{\rho}) \neq\nabla_{\theta_{V}} L(\theta_V,\theta_\pi,\theta^*_{\rho})$, the primal gradient is biased and the results in \citep{dai2018sbeed,hong2017prox} do not fit this problem.

 In the  dual update  we do a consensus update $\theta^{t+1}_{\rho}=\frac{1}{2} D^{-1}L^{+} \theta^t_{\rho}-\frac{\alpha_\rho}{2} D^{-1} A' \mu^t_{\rho}+\frac{\alpha_\rho}{2}D^{-1} g(\theta^t_{\rho})$ using the stochastic gradient of each agent, where $
\mu_\rho$ is some auxiliary variable to incorporate the communication, $D$ is the degree matrix, $A$ is the node-edge incidence matrix, $L^{+}$ is  sign-less graph Laplacian. We defer the detail definition and the derivation of this algorithm to  Appendix \ref{appendix:topology} and Appendix \ref{appendix:update_rule} due to  space limitation.  After  updating the dual parameters, we optimize the primal parameters $\theta_v$, $\theta_{\pi}$. Similarly, we use a mini-batch data from the replay buffer and then do a consensus update on $\theta_v$. The same remarks on $\rho$ also hold for the primal parameter $\theta_v$. Notice here we do not need the consensus update on $\theta_\pi$, since each agent's policy $\pi^i(s,a^i)$ is different than each other.    This update rule is adapted from a primal decentralized optimization algorithm \citep{hong2017prox}. Notice even in the pure primal case, \citet{hong2017prox} only consider the batch gradient case while our algorithm and analysis include the stochastic and biased gradient case. In practicals implementation, we consider the decentralized momentum method and multi-step temporal consistency to accelerate the training; see details in Appendix \ref{appendix:acceleration} and Appendix \ref{appendix:multi-step}.

\textbf{Remarks on  Algorithm \ref{alg:VP}}. 
\textbf{(1)} In the \textit{single} agent case, \cite{dai2018sbeed} assume the dual problem can be exactly solved and thus they analyze a simple pure primal problem. However such assumption is unrealistic especially when the dual variable is represented by the deep neural network. Our \textit{multi-agent} analysis considers the \textit{inexact} solution. This is much harder than that in \citep{dai2018sbeed}, since now the primal gradient  is biased. \textbf{(2)} The update  of each agent just needs the information of the agent itself and its neighbors. See this from the definition of $D$, $A$, $L^{+}$ in the appendix. \textbf{(3)} The topology of the Graph $\mathcal{G}$ affects the convergence speed. In particular, the rate depends on $\sigma_{\min}(A'A)$ and $\sigma_{\min}(D)$, which are related to spectral gap of the network.

\begin{algorithm}[h]
	\caption{Value Propagation}
	\label{alg:VP}
	\begin{algorithmic}
		\STATE { Input: Environment ENV, learning rate $\alpha_\pi$, $\alpha_v$, $ \alpha_\rho$, discount factor $\gamma$, number of step $T_{dual}$ to train dual parameter $\theta_{\rho^i}$,   replay buffer capacity $B$,  node-edge incidence matrix $A\in R^{E\times N}$, degree matrix $D$, signless graph Laplacian $L^{+}$}.	
		\STATE {Initialization of $\theta_{v^i}, \theta_{\pi^i}, \theta_{\rho^i}$, $\mu_\rho^0=0$, $\mu_V^0=0$}. 
		\FOR{$t=1,...,T$}
		\STATE{ sample trajectory $s_{0:\tau}\sim \pi(s,a)=\prod_{i=1}^{N}\pi^i(s,a^i) $ and add it into the replay buffer.}\\		
		\textbf{1. Update the dual parameter} $\theta_{\rho^i}$ \\
		
		Do following dual update $T_{dual}$ times:\\
		
		\STATE{Random sample a mini-batch of transition $(s_t, \{a^i_t\}_{i=1}^{N},s_{t+1}, \{r^i_t\}_{i=1}^N)$ from the replay buffer.}
		
		\FOR {agent $i=1$  to $n$}
		\STATE{Calculate the stochastic gradient $g(\theta^t_{\rho^i})$ of $-\eta(\delta_i(s_t,a_t,s_{t+1})-\rho_i(s_t,a_t))^2$ w.r.t. $\theta^t_{\rho^i}$.  }
		\ENDFOR \\
		// Do consensus update on  $\theta_\rho:=[\theta_{\rho^1},...,\theta_{\rho^N}]'$   \\
		$\theta^{t+1}_{\rho}=\frac{1}{2} D^{-1}L^{+} \theta^t_{\rho}-\frac{\alpha_\rho}{2} D^{-1} A' \mu^t_{\rho}+\frac{\alpha_\rho}{2}D^{-1} g(\theta^t_{\rho}), \mu^{t+1}_\rho=\mu^t_\rho+\frac{1}{\alpha_\rho} A\theta_{\rho}^{t+1}$ \\
		\textbf{2. Update primal parameters} $\theta_{v^i}, \theta_{\pi^i}$
		\STATE{Random sample a mini-batch of transition $(s_t, \{a^i_t\}_{i=1}^{N},s_{t+1}, \{r^i_t\}_{i=1}^N)$ from the replay buffer.}
		\FOR{ agent $i=1$ to $n$ }
		\STATE{Calculate the stochastic gradient $g(\theta^t_{v^i})$,$g(\theta^t_{\pi^i})$ of $(\delta_i(s_t,a_t,s_{t+1})-V_i(s_t))^2-\eta (\delta_i(s_t,a_t,s_{t+1})-\rho_i(s_t,a_t))^2 $,  w.r.t. $\theta^t_{v^i}$, $\theta^t_{\pi^i}$}
		\ENDFOR\\	
		// Do gradient decent on $ \theta_{\pi^i}$:
		$ \theta^{t+1}_{\pi^i}=\theta^{t}_{\pi^i}-\alpha_{\pi} g(\theta^t_{\pi^i})$ for each agent $i$.\\
		
		// Do consensus update on $\theta_{V}:=[\theta_{v^1},...,\theta_{v^N}]'$ :\\
		$\theta^{t+1}_{V}=\frac{1}{2} D^{-1}L^{+} \theta^t_{V}-\frac{\alpha_v}{2} D^{-1} A' \mu^t_{V}-\frac{\alpha_v}{2}D^{-1} g(\theta^t_{V}), \mu^{t+1}_{V}=\mu^t_{V}+\frac{1}{\alpha_v} A\theta_V^{t+1}.$ 
		\ENDFOR
	\end{algorithmic}
\end{algorithm}

\section{Theoretical Result}

In this section, we give the convergence result on Algorithm \ref{alg:VP}.  We first make two mild assumptions on the function approximators $f(\theta)$ of $V_i(s)$, $\pi^i(s,a^i)$, $\rho_i(s,a)$. 

\begin{assumption}\label{assumption}
	
	i) The function approximator $f(\theta)$ is differentiable and has Lipschitz continuous gradient, i.e., $\|\nabla f(\theta_1)-\nabla f(\theta_2)\|\leq L\|\theta_1-\theta_2\|, \forall \theta_1, \theta_2 \in R^K.$
	This is commonly assumed in the non-convex optimization.	ii) The function approximator $f(\theta)$ is lower bounded. This can be easily satisfied when the parameter is bounded, i.e., $\|\theta\|\leq C$ for some positive constant $C$. 
\end{assumption}

In the following, we give the theoretical analysis for  Algorithm \ref{alg:VP} in the same setting of \citep{antos2008learning,dai2018sbeed} where samples are prefixed and from one single $\beta$-mixing off-policy sample path. We denote $\Hat{L}(\theta_V,\theta_\pi)=\max_{\theta_\rho} L(\theta_V,\theta_\pi,\theta_\rho), s.t., \theta_{\rho^1}=,...,=\theta_{\rho^N} $

\begin{theorem}
	Let the function approximators of $V_i(s)$, $\pi^{i}(s,a^i)$ and $\rho_i(s,a)$ satisfy Assumption \ref{assumption}, snd denote the total training step be $T$. We solve the inner dual problem  with a approximated solution $\tilde{\theta}_{\rho}=(\tilde{\theta}_{\rho^1},...,\tilde{\theta}_{\rho^N})'$, such that $ \| \nabla_{\theta_{V}} L(\theta_V,\theta_\pi,\tilde{\theta}_{\rho}) -\nabla_{\theta_{V}} L(\theta_V,\theta_\pi,\theta^*_{\rho}) \|\leq c_1/\sqrt{T}$, and $ \| \nabla_{\theta_{\pi}} L(\theta_V,\theta_\pi,\tilde{\theta}_{\rho}) -\nabla_{\theta_{\pi}} L(\theta_V,\theta_\pi,\theta^*_{\rho}) \|\leq c_2/\sqrt{T}$. Assume the variance of the stochastic gradient $g(\theta_{V})$, $g(\theta_{\pi})$ and $g(\theta_{\rho})$ (estimated by a single sample) are bounded by $\sigma^2$, the size of the mini-batch is $\sqrt{T}$, the step size $\alpha_{\pi},\alpha_v, \alpha_\rho \propto \frac{1}{L}$.  Then value propagation in Algorithm \ref{alg:VP} converges to the stationary solution of $\Hat{L}(\theta_V,\theta_\pi)$ with rate $\mathcal{O} (1/T).$
\end{theorem}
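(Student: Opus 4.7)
The plan is to reduce the analysis of the saddle-point scheme to a biased, stochastic version of the primal decentralized descent analysis of \citet{hong2017prox}, but applied to the outer (value-function side) objective $\hat L(\theta_V,\theta_\pi)$. The central observation is that the only thing the inexact inner dual solve feeds into the primal update is a gradient; the assumption $\|\nabla_{\theta_V}L(\theta_V,\theta_\pi,\tilde\theta_\rho)-\nabla_{\theta_V}L(\theta_V,\theta_\pi,\theta_\rho^*)\| \leq c_1/\sqrt T$ (and the analogous bound for $\theta_\pi$) says precisely that we are running a decentralized stochastic proximal algorithm on $\hat L$ with a \emph{systematically biased} gradient of controlled size $O(1/\sqrt T)$ and stochastic noise of variance $\sigma^2/\sqrt T$ (from the mini-batch of size $\sqrt T$). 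Combined with the Danskin-type identity $\nabla \hat L(\theta_V,\theta_\pi)=\nabla_\theta L(\theta_V,\theta_\pi,\theta_\rho^*)$, this lets us analyze the whole Algorithm~\ref{alg:VP} as a perturbed descent method on $\hat L$.

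First I would introduce a Lyapunov function of the form
$P^t := \hat L(\theta_V^t,\theta_\pi^t)+\tfrac{\beta_1}{2}\|\mu_V^t\|^2+\tfrac{\beta_2}{2}\|A\theta_V^t\|^2$,
with constants $\beta_1,\beta_2$ tuned to the coefficients of the consensus update. The term $\|A\theta_V^t\|^2$ measures the consensus violation (it vanishes iff $\theta_{v^1}=\dots=\theta_{v^N}$), and $\mu_V^t$ plays the role of the multiplier certifying that constraint, exactly as in the Hong--Luo--Razaviyayn construction. Three ingredients feed into a one-step descent inequality on $P^t$: (a) smoothness of $\hat L$ in $(\theta_V,\theta_\pi)$, obtained by combining Assumption~\ref{assumption}(i) with a Danskin argument on the concave inner maximization; (b) a bias/variance split $g^t = \nabla_\theta L(\theta_V^t,\theta_\pi^t,\tilde\theta_\rho^t)+\xi^t$, where $\xi^t$ is zero-mean with $\mathbb{E}\|\xi^t\|^2 \leq \sigma^2/\sqrt T$ by the mini-batch size and the hypothesized bias of $\nabla_\theta L(\cdot,\tilde\theta_\rho^t)-\nabla\hat L$ is at most $c/\sqrt T$; and (c) the consensus contraction for the update $\theta_V^{t+1}=\tfrac12 D^{-1}L^+\theta_V^t-\tfrac{\alpha_v}{2}D^{-1}A'\mu_V^t-\tfrac{\alpha_v}{2}D^{-1}g(\theta_V^t)$, $\mu_V^{t+1}=\mu_V^t+\tfrac{1}{\alpha_v}A\theta_V^{t+1}$, whose modulus depends on $\sigma_{\min}(A'A)$ and $\sigma_{\min}(D)$, the spectral-gap constants flagged in the remarks on Algorithm~\ref{alg:VP}.

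Combining (a)--(c) via the descent lemma and Young's inequality would produce $\mathbb{E}[P^{t+1}-P^t]\leq -c_3\,\mathbb{E}\|\nabla\hat L(\theta_V^t,\theta_\pi^t)\|^2 + c_4\bigl(\text{bias}^2 + \text{per-step variance}\bigr)$. Telescoping from $t=1$ to $T$, using Assumption~\ref{assumption}(ii) to lower-bound $\hat L$, and summing the per-step error budgets (squared bias summing to $O(1)$ and the variance contribution balanced by the mini-batch size $\sqrt T$ and step size $\alpha\propto 1/L$) yields $\frac{1}{T}\sum_{t=1}^T\mathbb{E}\|\nabla\hat L(\theta_V^t,\theta_\pi^t)\|^2 = \mathcal O(1/T)$. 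The analogous but simpler bookkeeping for $\theta_\pi$ handles the policy update, where no consensus is needed because the factorization $\pi(s,a)=\prod_i\pi^i(s,a^i)$ makes $\theta_\pi$ naturally node-local.

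The hardest step will be (b) combined with (c): unlike the batch, exact-dual setting of \citet{hong2017prox} and the single-agent, exact-dual setting of \citet{dai2018sbeed}, one must propagate a biased, noisy gradient \emph{through} the consensus dynamics without letting the bias be amplified by the multiplier recursion $\mu_V^{t+1}=\mu_V^t+\tfrac1{\alpha_v}A\theta_V^{t+1}$. Concretely, controlling the cross terms between the bias and $\|\mu_V^t\|$, and between the bias and the consensus residual $\|A\theta_V^t\|$, so that their telescoped contributions remain $O(1)$ is the delicate point; this is also where the spectral quantities $\sigma_{\min}(A'A)$ and $\sigma_{\min}(D)$ govern how aggressively one may choose $\alpha_v,\alpha_\rho$ and therefore what mini-batch size suffices.
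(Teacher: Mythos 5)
Your overall strategy---fold the inexact inner dual solve into a gradient bias of size $O(1/\sqrt{T})$, treat the mini-batch noise as zero-mean variance, and run a perturbed version of the decentralized Prox-PDA potential analysis of \citep{hong2017prox} on the resulting constrained problem---is indeed the strategy the paper follows (the paper abstracts the primal step as $\min_x \sum_i f_i(x_i)$ s.t.\ $Ax=b$ with gradient $g(x^t)=\nabla f(x^t)+\epsilon_t$, $\epsilon_t=\varepsilon_t+\tilde\varepsilon_t$). However, two concrete pieces of your plan diverge from what actually makes that analysis work, and the first is a genuine gap. The Lyapunov function you propose, $P^t=\hat L(\theta_V^t,\theta_\pi^t)+\tfrac{\beta_1}{2}\|\mu_V^t\|^2+\tfrac{\beta_2}{2}\|A\theta_V^t\|^2$, adds $\|\mu_V^t\|^2$ with a positive weight; but the multiplier recursion $\mu^{t+1}=\mu^t+\tfrac{1}{\alpha}A\theta^{t+1}$ makes $\|\mu^t\|$ grow whenever the consensus residual is not already small, and none of your ingredients (a)--(c) produces a matching negative term, so a one-step descent inequality for this $P^t$ is not available. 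The paper's potential is instead $P_{c,\beta}=L_\beta(x^{t+1},\mu^{t+1})+\tfrac{c\beta}{2}(\|Ax^{t+1}-b\|^2+\|x^{t+1}-x^t\|^2_{B^TB})$, i.e.\ the augmented Lagrangian with the sign-indefinite cross term $\langle\mu,Ax-b\rangle$ plus a proximal successive-difference term, and the multiplier is controlled not by a generic consensus contraction but by the primal optimality condition of the prox step, $A^T\mu^{t+1}=-\nabla f(x^t)-\epsilon_t-\beta B^TB(x^{t+1}-x^t)$, together with the fact that $\mu^t$ lies in the column space of $A$, giving $\sigma_{\min}^{1/2}\|\mu^{t+1}-\mu^t\|\le\|A^T(\mu^{t+1}-\mu^t)\|$ and hence a bound on the dual movement by successive primal differences plus $\|\epsilon_t-\epsilon_{t-1}\|$ (Lemma \ref{lemma:lemma_on_mu}). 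This is exactly the ``hardest step'' you flag (bias propagating through the multiplier recursion), and your proposal does not supply the idea that resolves it. Note also that the resulting potential inequality (Lemma \ref{lemma:potential}) is not a clean one-step descent: it carries a negative coefficient on $\|x^{t+1}-x^t\|^2$ and a positive one on $\|x^{t}-x^{t-1}\|^2$, so telescoping only works under the parameter condition \eqref{equ:requirement_beta}, and lower-boundedness of the potential itself requires a separate summation argument rather than just Assumption \ref{assumption}(ii).

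The second divergence is the convergence criterion. You claim $\tfrac1T\sum_t\mathbb{E}\|\nabla\hat L(\theta_V^t,\theta_\pi^t)\|^2=O(1/T)$, i.e.\ stationarity of the reduced objective at the raw (possibly non-consensus) iterates. What the paper proves is $\min_t\mathbb{E}\,Q(x^t,\mu^{t-1})=O(1/T)$ with $Q$ the KKT residual \eqref{equ:convergence_criteria} of the constrained reformulation (gradient of the augmented Lagrangian plus the consensus violation $\|Ax-b\|^2$); passing from that to a bound on $\|\nabla\hat L\|$ would need an additional bridge (e.g.\ evaluating at averaged parameters and using smoothness), which your plan asserts implicitly but does not provide. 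Your Danskin-type identification $\nabla\hat L=\nabla_\theta L(\cdot,\theta_\rho^*)$ is a reasonable way to justify treating the inexactness as pure bias and is consistent with how the paper uses its hypothesis, but by itself it does not repair the two points above: to complete the argument you should replace your $P^t$ with the augmented-Lagrangian-plus-proximal potential and prove the analogue of Lemma \ref{lemma:lemma_on_mu}, and state the guarantee in terms of the KKT-type residual $Q$ (or add the extra step relating $Q$ to $\nabla\hat L$).
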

\textbf{Remarks:} \textbf{(1)} The convergence criteria and its dependence on the network structure are involved. We defer the definition of them to the proof section in the appendix (Equation \eqref{equ:convergence_criteria}).
\textbf{(2)} We require that the approximated dual solution $\tilde{\theta}_\rho$ are not far from $\theta_\rho^*$ such that the estimation of the primal gradient of $\theta_v$ and $\theta_\pi$ are not far from the true one (the distance is less than $\mathcal{O} (1/\sqrt{T})$). Once the inner dual problem is concave, we can get this approximated solution easily using vanilla decentralized stochastic gradient method after at most $T$ steps. If the dual problem is non-convex, we still can show  the dual problem converges to some stationary solution with rate $\mathcal{O}(1/T) $ by our proof.
\textbf{(3)} In the theoretical analysis,  the stochastic gradient estimated from the mini-batch (rather than the estimation from a single sample ) is  common in non-convex analysis, see the work \citep{ghadimi2016accelerated}. In practice, a mini-batch of samples is commonly used in training deep neural network.   
 \section{Related work}
 Among related work on MARL, the setting of \citep{zhang2018fully} is close to ours, where the authors proposed a fully decentralized multi-agent Actor-Critic algorithm to maximize the expected time-average reward $ \lim_{T\rightarrow \infty}\frac{1}{T}\mathbb{E}\sum_{t=1}^{T}\frac{1}{n}\sum_{i=1}^{n} r_i^t$. They provide the \textit{asymptotic} convergence analysis on the \textit{on-policy} and \textit{linear} function approximation setting. In our work, we consider the discounted reward setup, i.e., Equation \eqref{equ:objective}. Our algorithm  includes both on-policy and \textit{off-policy} setting thus can exploit  data more efficiently. Furthermore, we provide a convergence rate  $\mathcal{O}(\frac{1}{T})$  in the \textit{non-linear} function approximation  setting which is much stronger than the result in \citep{zhang2018fully}. \citet{littman1994markov} proposed the framework of  Markov games  which can be applied  to collaborative and competitive setting \citep{lauer2000algorithm,hu2003nash}. These early works considered the tabular case thus can not apply to  real problems with large state space. Recent works \citep{foerster2016learning,foerster2018counterfactual,rashid2018qmix,raileanu2018modeling,jiang2018graph,lowe2017multi} have exploited powerful deep learning and obtained some promising empirical results. However most of them  lacks theoretical guarantees while our work provides convergence analysis.  We emphasize that most of the research on MARL is in the fashion of centralized training and decentralized execution. In the training, they do not have the constraint on the communication, while our work has a network  decentralized  structure.

 \section{Experimental result}
 
The goal of our experiment is two-fold: To better understand the effect of each  component in the proposed algorithm; and to evaluate efficiency of value propagation in the off-policy setting. To this end, we first do an ablation study on a simple random MDP problem, we then evaluate the performance on the cooperative navigation task \citep{lowe2017multi}.  The settings of the experiment are similar to those in \citep{zhang2018fully}. Some implementation details are deferred to Appendix \ref{appendix:implementation_detail} due to space constraints.
 
 \subsection{Ablation Study}\label{section:random_MDP}
 
 \begin{figure*}
 	\centering
 	\includegraphics[width=0.32\textwidth]{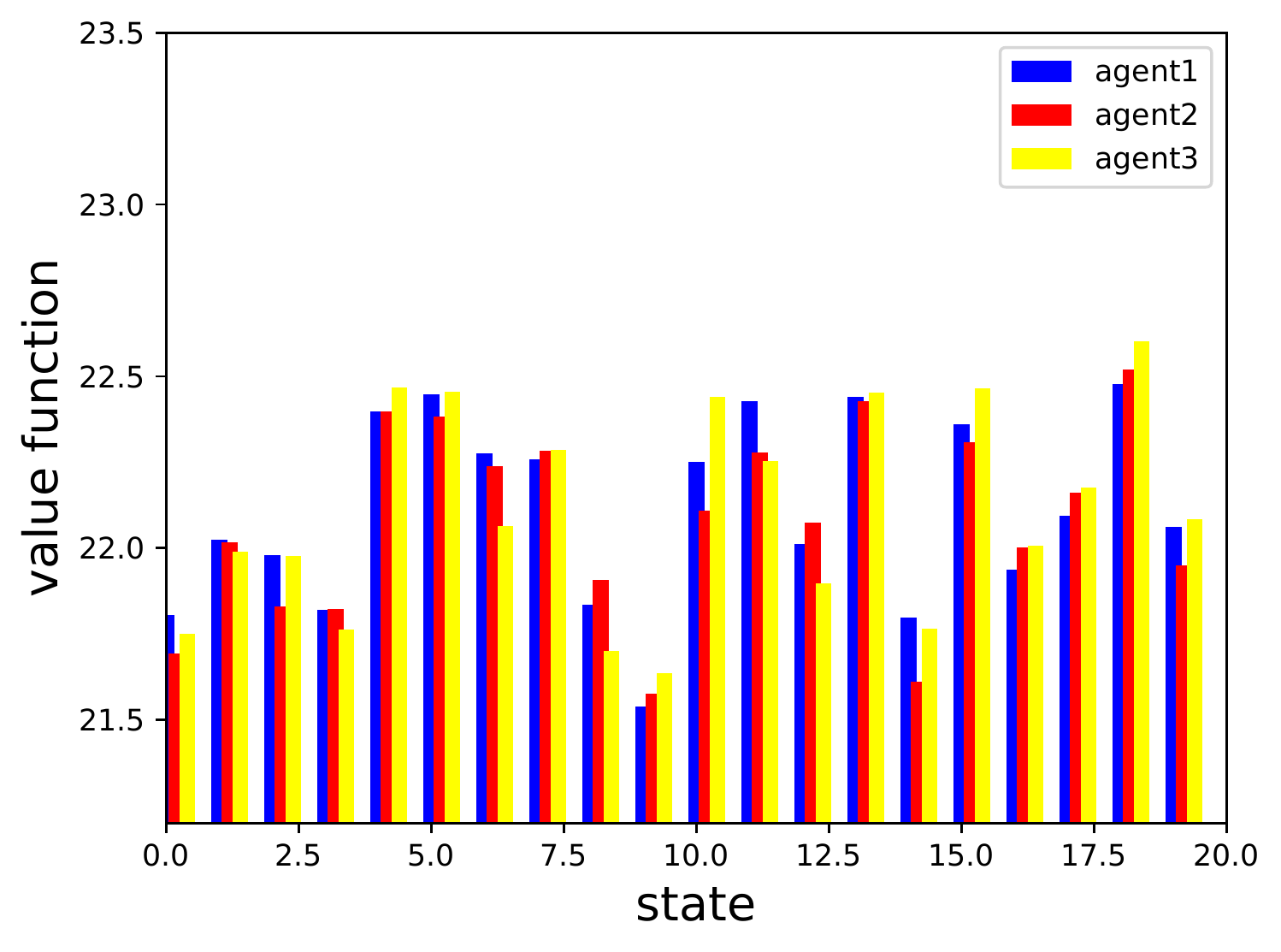}
 	\includegraphics[width=0.32\textwidth]{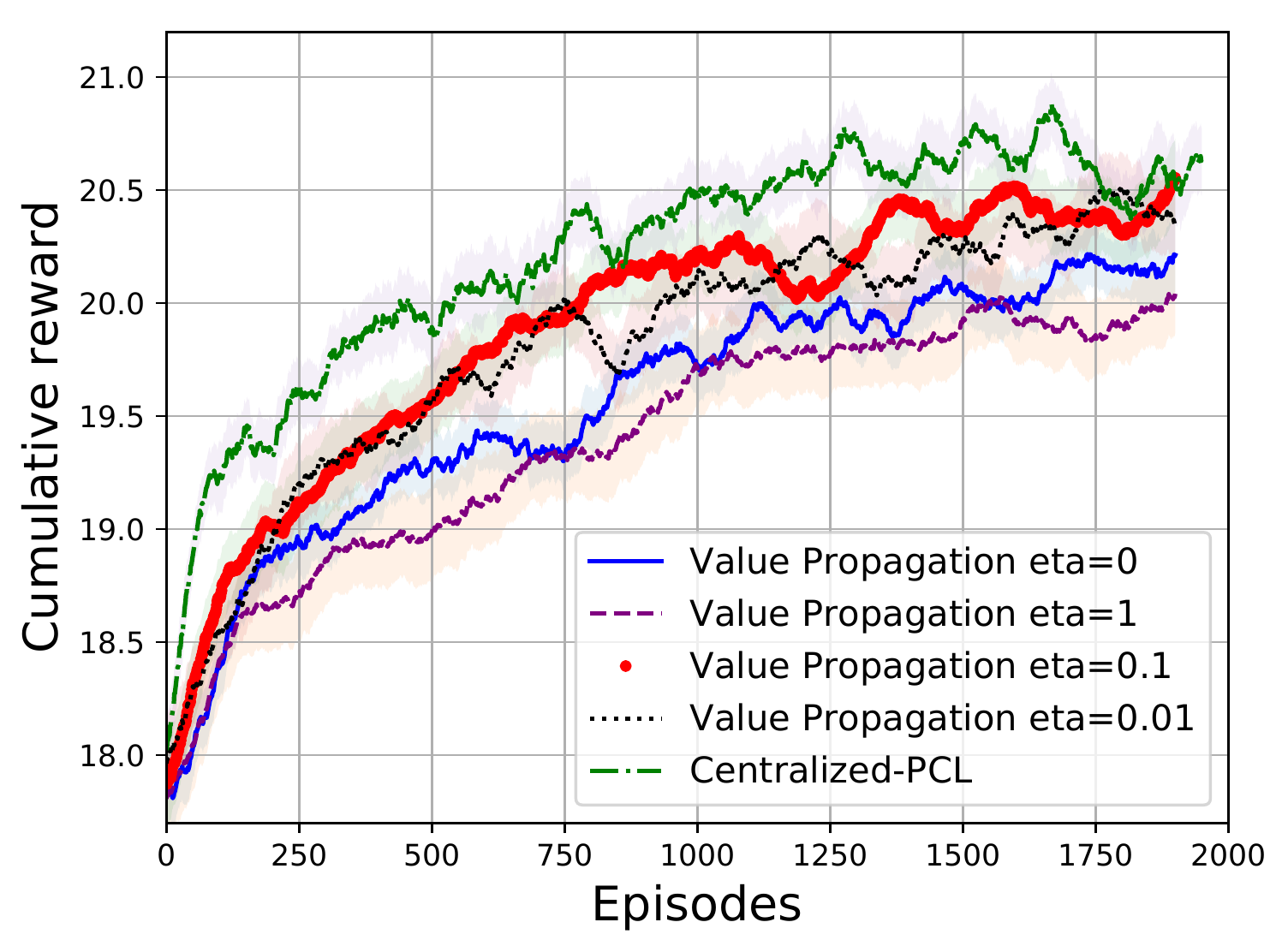}
 	\includegraphics[width=0.32\textwidth]{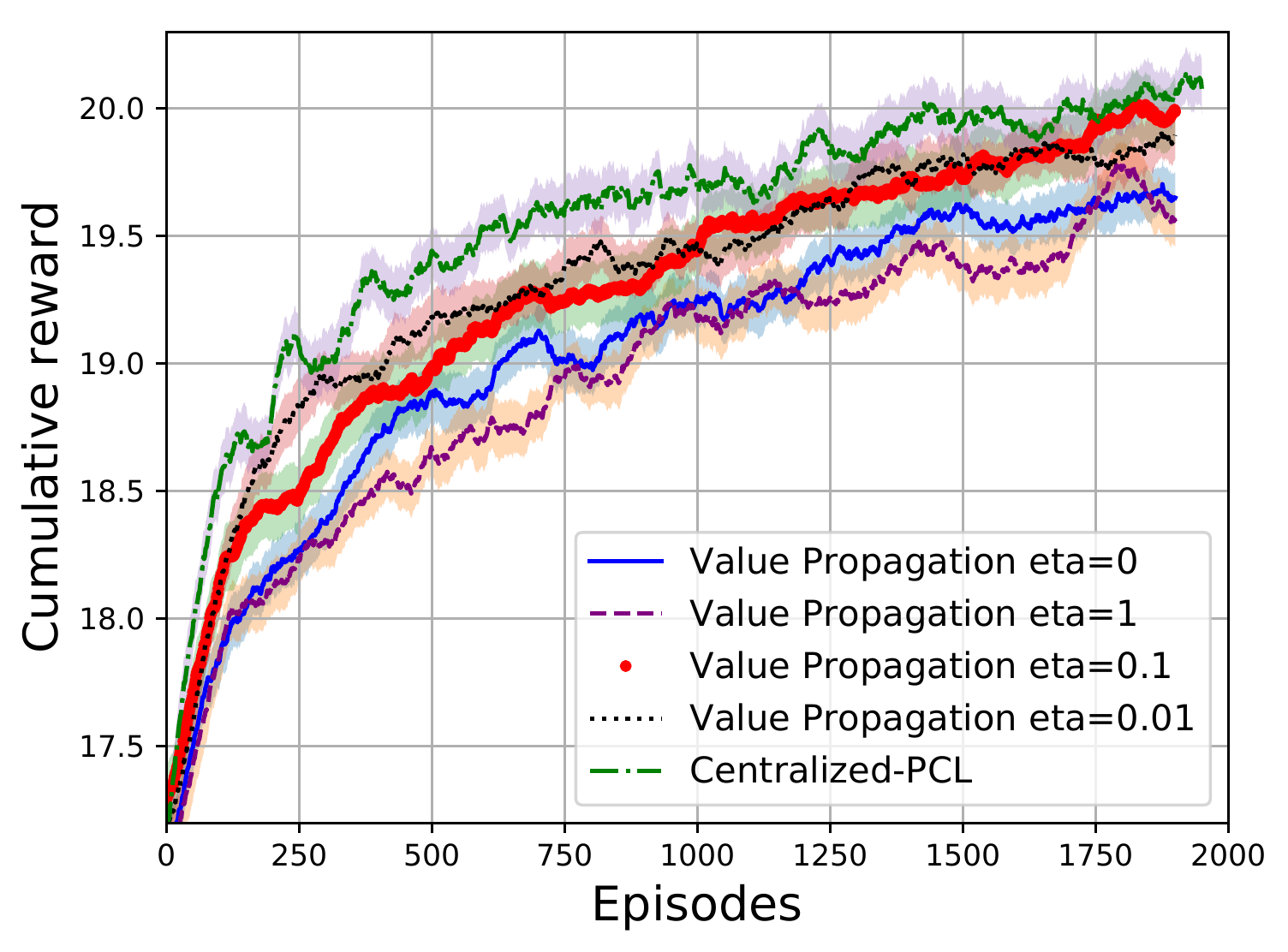}
 	\caption{Results on randomly sampled MDP. Left: Value function of different agents in value propagation. In the figure, value functions of three agents are similar, which means agents get consensus on value functions.  Middle: Cumulative reward of value propagation (with different $\eta$) and centralized PCL with 10 agents.  Right : Results with  20 agents. }\label{Figure:random_mdp}	
 	\vspace{-2mm}
 \end{figure*}
 
  In this experiment, we test effect of several components of our algorithm such as the consensus update, dual formulation in a random MDP problem. Particularly we answer following three questions:
  \textbf{(1)} Whether an agent can get consensus through  message-passing in  value propagation even when each agent just knows its local reward. \textbf{(2) }How much performance does the decentralized approach sacrifice comparing with centralized one? \textbf{(3)} What is the effect of the dual part in our formulation ($0\leq \eta\leq 1 $ and $\eta=0$ corresponds to the pure primal form)?
 
   We compare value propagation with the \textit{centralized} PCL. The centralized PCL means that there is a central node to collect rewards of all agent, thus it can optimize the objective function \eqref{equ:multi_PCL_primal} using the single agent PCL algorithm \citep{nachum2017bridging,dai2018sbeed}.  Ideally, value propagation should converges to the same long term reward  with the one achieved by the centralized PCL. In the experiment, we consider a multi-agent RL problem with $N=10$ and $N=20$ agents, where each agent has two actions.  A discrete MDP is randomly generated with $|\mathcal{S}|=32$ states. The transition probabilities are distributed uniformly with a small additive constant to ensure ergodicity of the MDP, which is $\mathcal{P}(s'|a,s)\propto p_{ss'}^a+10^{-5}, p_{ss'}^a \sim U[0,1]$. For each agent $i$ and each state-action pair $(s,a)$, the reward $R_{i}(s,a)$ is uniformly sampled from $[0,4]$.

 \begin{figure*}[t]
 	\centering
 	\includegraphics[width=0.32\textwidth]{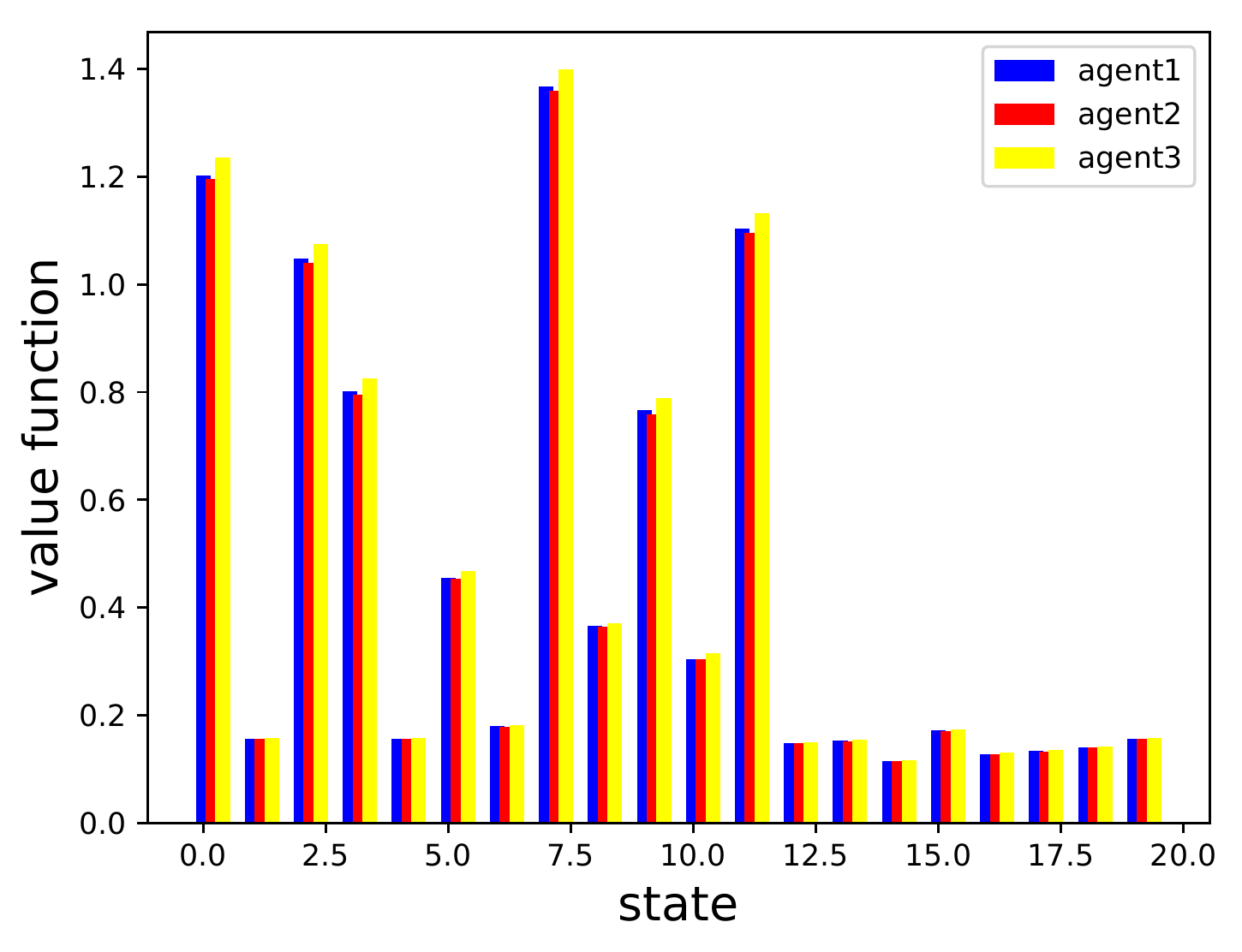}
 	\includegraphics[width=0.32\textwidth]{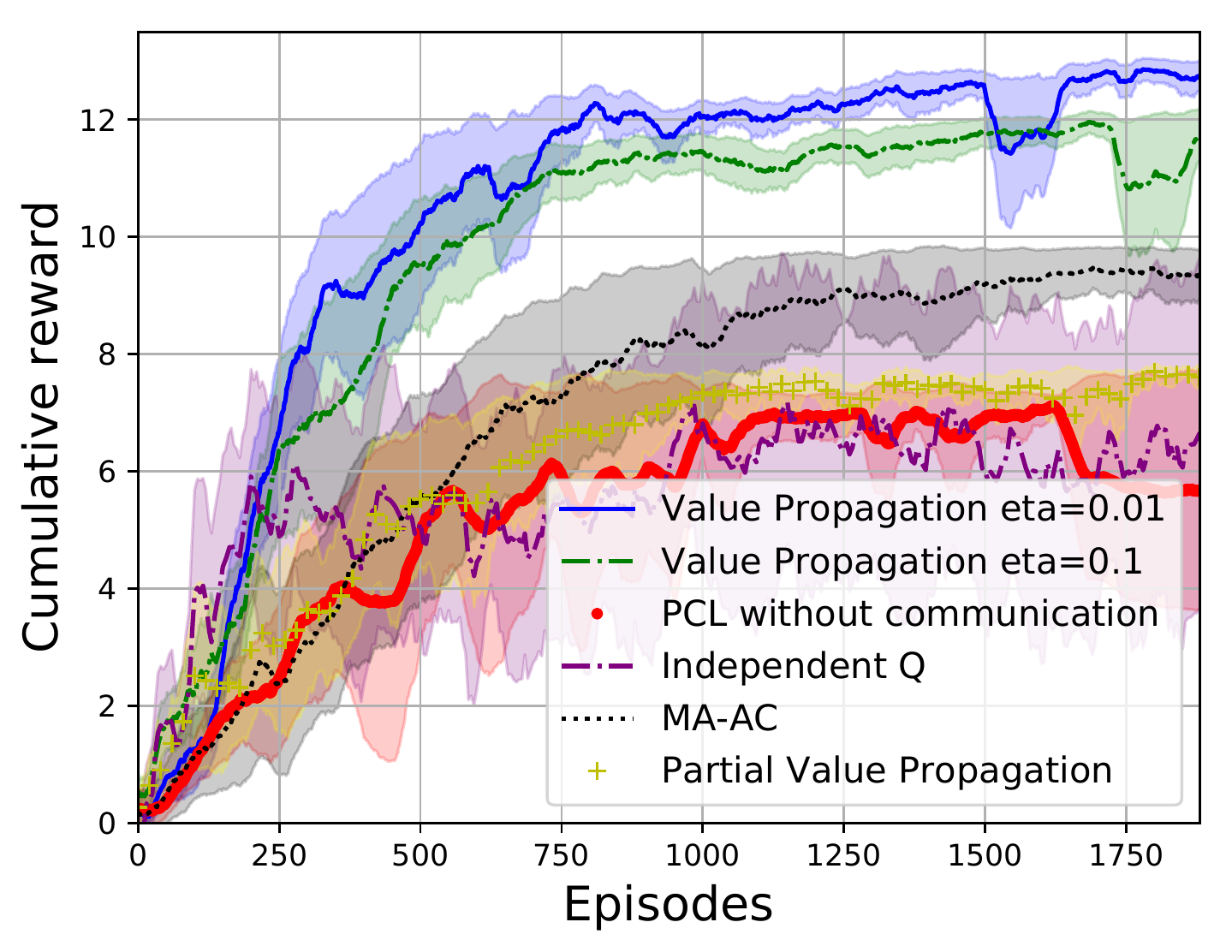}
 	\includegraphics[width=0.32\textwidth]{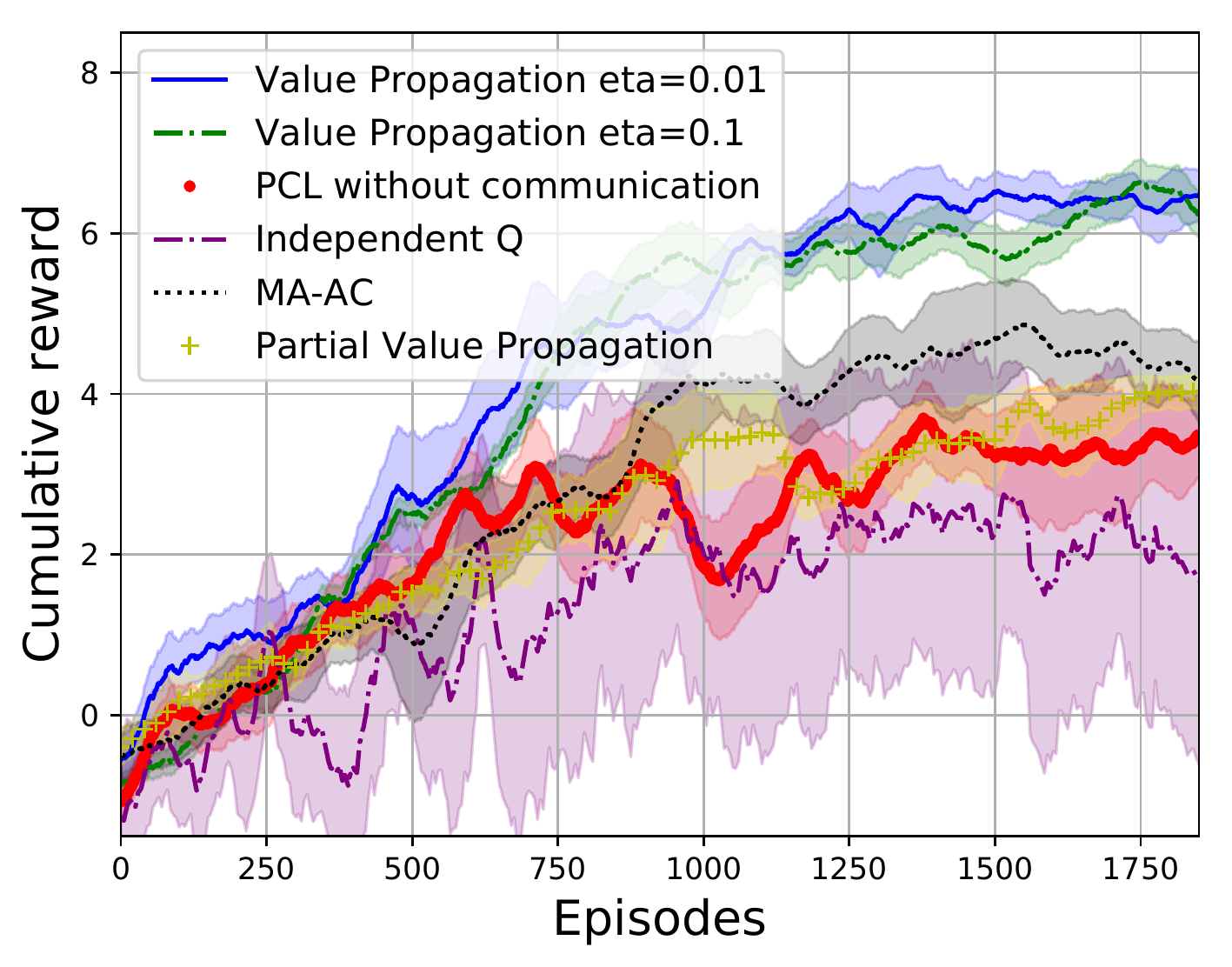}
 	\caption{Results on Cooperative Navigation task.  Left: value functions of three random picked agents (totally 16 agents) in value propagation. They get consensus.  Middle : cumulative reward of value propagation (eta=0.01 and eta=0.1), MA-AC and PCL without communication with agent number N=8. Right: Results with agent number N=16. Our algorithm outperforms MA-AC and PCL without communication.  Comparing with the middle panel, the number of agent increases in the right panel. Therefore, the problem becomes harder (more collisions). We see agents achieve lower cumulative reward (averaged over agents) and need more time to find a good policy.  }\label{Figure:Navigation}
 	\vspace{-2mm}
 \end{figure*}
 
 In the left panel of Figure \ref{Figure:random_mdp}, we verify that the value function $ v_i(s)$ in value propagation reaches the consensus through message-passing in the end of the training. Particularly, we randomly choose three agent $i$, $j$, $k$ and draw their value functions over 20 randomly picked states. It is easy to see that  value functions $v_i (s)$, $v_j(s)$, $v_k(s)$ over these states are almost same. This is accomplished by the consensus update  in value propagation.
 In the middle and right panel of Figure \ref{Figure:random_mdp}, we compare the result of value propagation  with centralized PCL and evaluate the effect of the dual part of value propagation. Particularly, we pick $\eta=0, 0.01, 0.1,1$ in the experiment, where $\eta=0$ corresponds to the pure primal formulation.  When $\eta$ is too large ($\eta=1$), the algorithm would have large variance while $\eta=0$ the algorithm has some bias. Thus value propagation with $\eta=0.1, 0.01$ has better result. We also see that value propagation ($\eta=0.1,0.01$)  and centralized PCL converge to almost the same value, although there is a gap between centralized and decentralized algorithm. The centralized PCL converges faster than value propagation, since it does not need time to diffuse  the reward information over the network.

 \subsection{ Cooperative Navigation task}
 The aim of this section is to demonstrate that the value propagation  outperforms decentralized multi-agent Actor-Critic (MA-AC)\citep{zhang2018fully}, independent Q learning \citep{tan1993multi}, the Multi-agent PCL \textit{without communication}. Here PCL without communication means each agent maintains its own estimation of policy $\pi^i(s,a^i)$ and value function $V^i(s)$ but there is no communication Graph. Notice that this is different from the centralized PCL in Section \ref{section:random_MDP}, where centralized PCL has a central node to collect all reward information and thus do not need further communication. Note that the original MA-AC is designed for the averaged reward setting thus we adapt it into the discounted case to fit our setting.   We test the value propagation in the environment of the Cooperative Navigation task \citep{lowe2017multi}, where agents need to reach a set of $L$ landmarks through physical movement. We modify this environment to fit our setting. A reward is given when the agent reaches its own landmarks. A penalty is received if  agents collide with other agents.  Since the position of landmarks are different, the reward function of each agent is different.  Here we test the case the state is globally observed and partially observed.  In particular, we assume the environment is in a rectangular region with size $2\times 2$. There are $N=8$ or $N=16$ agents. Each agent has a single target landmark, i.e., $L=N$, which is randomly located in the region. Each agent has five actions which corresponds to going up, down, left, right with units 0.1 or staying at the position. The agent has high probability (0.95) to move in the direction following its action and go in other direction randomly otherwise. The maximum length of each epoch is set to be 500 steps.  When the agent is close enough to the landmark, e.g., the distance is less than 0.1, we think it reaches the target and gets reward $+5$.   When two agents are close to each other (with distance less than 0.1), we treat this case as a collision and a penalty $-1$ is received for each of the agents. The state includes the position of the agents. The communication graph is generated as that in Section \ref{section:random_MDP} with connectivity ratio $4/N$. In the partially observed case, the actor of each agent can only observe  its own and  neighbors' states. We report the results in Figure \ref{Figure:Navigation}.

In the left panel of Figure \ref{Figure:Navigation}, we see the value function $v_i(s)$ reaches consensus in value propagation. In the middle and right panel of Figure \ref{Figure:Navigation}, we compare value propagation  with  PCL without communication, independent Q learning and MA-AC. In  PCL without communication, each agent maintains its own policy, value function and dual function, which is trained by the algorithm SBEED \citep{dai2018sbeed} with $\eta=0.01$. Since there is no communication between agents, intuitively agents may have more collisions in the learning process than those in value propagation. Similar augment holds for the independent Q learning.  Indeed, In the middle and right panel, we  see value propagation learns the policy much faster than PCL without communication. We also observe that value propagation outperforms MA-AC. One possible reason is that value propagation is an off-policy method thus we can apply experience replay which exploits data more efficiently than the on-policy method MA-AC.  We also test the performance of value propagation (result labeled as partial value propagation in Figure \ref{Figure:Navigation}) when the state information of actor is partially observed. Since the agent has limited information, its performance is worse than the fully observed case. But it is better than the PCL without communication  (fully observed state).

\bibliography{MARL}
\bibliographystyle{plainnat}

\clearpage

\appendix

\section{Details on the Value Propagation}

\subsection{Topology of the Graph}\label{appendix:topology}
Here, we explain the matrix in the Algorithm \ref{alg:VP} which are closely related to the topology of the Graph, which is left from the main paper due to the limit of the space.

\begin{itemize}
	\item  $D=diag[d_1,...,d_N]$ is the degree matrix, with $d_i$ denoting the degree of node $i$. 
	\item  $A$ is the node-edge incidence matrix: if $e\in \mathcal{E}$ and it connects vertex i and j with $i>j$, then $A_{ev}=1$ if $v=i$, $A_{ev}=-1$ if $v=j$ and $A_{ev}=0$ otherwise.
	\item The signless incidence matrix $B:=|A|$, where the absolute value is taken for each component of $A$. 
	\item  The signless graph Laplacian $L^{+}=B^TB$. By definition $L^{+}(i,j)=0$  if $(i,j)\notin \mathcal{E}$.  Notice the non-zeros element in $A$, $L^+$, the update just depends on each agent itself and its neighbor.  
	
\end{itemize}

\subsection{Practical Acceleration}\label{appendix:acceleration}
The algorithm \ref{alg:VP} trains the agent with vanilla gradient decent method with a extra consensus update. In practice, the adaptive momentum gradient methods including Adagrad \cite{duchi2011adaptive}, Rmsprop \cite{tielemandivide} and Adam \cite{kingma2014adam} have much better performance in training the deep neural network. We adapt Adam in our setting, and propose algorithm \ref{alg:VP2} which has  better performance than algorithm \ref{alg:VP} in practice.

\begin{algorithm}[htb!]
	\caption{Accelerated value propagation }
	\label{alg:VP2}
	\begin{algorithmic}
		\STATE { Input: Environment ENV, learning rate $\beta_1,\beta_2 \in [0,1)$, $\alpha_t$, discount factor $\gamma$, a mixing matrix $W$, number of step $T_{dual}$ to train dual parameter $\theta_\rho^i$,   replay buffer capacity $B$}.	
		\STATE {Initialization of $\theta_{v^i}, \theta_{\pi^i}, \theta_{\rho^i}$, moment vectors $m^0_{v^i}=m^0_{\rho^i}=0$, $w^0_{v^i}=w^0_{\rho^i} =0$ }. 
		\FOR{$t=1,...,T$}
		\STATE{ sample trajectory $s_{0:\tau}\sim \pi(s,a)=\prod_{i=1}^{N}\pi^i(s,a^i) $ and add it into the replay buffer.}\\		
		//\textbf{ Update the dual parameter} $\theta_{\rho^i}$\\
		Do following update $T_{dual}$ times:
		\STATE{Random sample a mini-batch of transition $(s_t, \{a^i_t\}_{i=1}^{N},s_{t+1}, \{r^i_t\}_{i=1}^N)$ from the replay buffer.}
		
		\FOR {agent $i=1$  to $n$}
		\STATE{Calculate the stochastic gradient $g(\theta^t_{\rho^i})$ of $-\eta(\delta_i(s_t,a_t,s_{t+1})-\rho_i(s_t,a_t))^2$ w.r.t. $\theta^t_{\rho^i}$. \\
			// update  momentum parameters:
			$m_{\rho^i}^{t}=\beta_1 m^{t-1}_{\rho^i}+(1-\beta_1) (-g(\theta^t_{\rho^i})) $\\
			$w^{t}_{\rho^i}=\beta_2 w^{t-1}_{\rho^i}+(1-\beta_2) g(\theta^t_{\rho^i}) \odot g(\theta^t_{\rho^i}) $ }
		\ENDFOR \\
		// Do consensus update for each agent $i$ \\
		$\theta^{t+\frac{1}{2}}_{\rho^i}=\sum_{j=1}^{N}[W]_{ij}\theta^t_{\rho^j},$ $\theta^t_{\rho^i}=\theta^{t+\frac{1}{2}}_{\rho^i}-\alpha_t \frac{m^t_{\rho^i}}{\sqrt{w^t_{\rho^i}}}$

		// \textbf{End the update of dual problem}\\
		// \textbf{Update primal parameters} $\theta_{v^i}, \theta_{\pi^i}$.
		\STATE{Random sample a mini-batch of transition $(s_t, \{a^i_t\}_{i=1}^{N},s_{t+1}, \{r^i_t\}_{i=1}^N)$ from the replay buffer.}
		\FOR{ agent $i=1$ to $n$ }
		\STATE{Calculate the stochastic gradient $g(\theta^t_{v^i})$,$g(\theta^t_{\pi^i})$ of $(\delta_i(s_t,a_t,s_{t+1})-V_i(s_t))^2-\eta (\delta_i(s_t,a_t,s_{t+1})-\rho_i(s_t,a_t))^2 $,  w.r.t. $\theta^t_{v^i}$, $\theta^t_{\pi^i}$\\
			//update the momentum parameter:\\
			$m_{v^i}^{t}=\beta_1 m^{t-1}_{v^i}+(1-\beta_1) g(\theta^t_{v^i}) $\\
			$w^{t}_{v^i}=\beta_2 w^{t-1}_{v^i}+(1-\beta_2) g(\theta^t_{v^i}) \odot g(\theta^t_{v^i}) $	
			
		}
		
		// Using Adam to update $ \theta_{\pi^i}$ for each agent $i$.\\
		// Do consensus update on $\theta_{v^i}$ for each agent $i$:\\
		$\theta^{t+\frac{1}{2}}_{v^i}=\sum_{j=1}^{N}[W]_{ij}\theta^t_{v^j},$ $\theta^t_{v^i}=\theta^{t+\frac{1}{2}}_{v^i}-\alpha_t \frac{m^t_{v^i}}{\sqrt{w^t_{v^i}}}$		
		\ENDFOR
		\ENDFOR
	\end{algorithmic}
\end{algorithm}

\mbox{\bf Mixing Matrix:} In Algorithm \ref{alg:VP2}, there is a mixing matrix $W\subset R^{N\times N}$ in the consensus update. As its name suggests, it mixes information of the agent and its neighbors. This nonnegative matrix $W$ need to satisfy the following condition.

\begin{itemize}
	\item $W$ needs to be doubly stochastic, i.e., $W^T\textbf{1}=\textbf{1}$ and $W \textbf{1}=\textbf{1}$.
	\item $W$ respects the communication graph $\mathcal{G}$, i.e., $W(i,j)=0$ if $(i,j)\notin \mathcal{E}$.
	\item The spectral norm of $W^T (I-\textbf{1}\textbf{1}^T/N
	)W $ is strictly smaller than one.
\end{itemize}
Here is one particular choice of the mixing matrix $W$ used in our work which satisfies above requirement called Metropolis weights \cite{xiao2005scheme}.
\begin{equation}\label{equ:mixing_matrix}
\begin{split}
W(i,j)={1+\max[d(i), d(j)] }^{-1}, \forall (i,j)\in \mathcal{E},\\
W(i,i)=1-\sum_{j\in NE(i)} W(i,j), \forall i\in\mathcal{N},
\end{split}
\end{equation}
where $NE(i)=\{j\in \mathcal{N}: (i,j)\in \mathcal{E}\}$ is the set of neighbors of the agent $i$ and $d(i)=|\mathcal{N}(i)|$ is the degree of agent $i$. Such mixing matrix is widely used in decentralized and distributed optimization \cite{boyd2006randomized,cattivelli2008diffusion}. The update rule of the momentum term in Algorithm \ref{alg:VP2} is adapted from Adam. The consensus (communication)  steps  are 	$\theta^{t+\frac{1}{2}}_{\rho^i}=\sum_{j=1}^{N}[W]_{ij}\theta^t_{\rho^j}$ and $\theta^{t+\frac{1}{2}}_{v^i}=\sum_{j=1}^{N}[W]_{ij}\theta^t_{v^j}$.

\subsection{Multi-step Extension on value propagation}\label{appendix:multi-step}
The temporal consistency can be extended to the multi-step case \cite{nachum2017bridging}, where the following equation holds
\begin{equation*}
\begin{split}
V(s_0)=\sum_{t=0}^{k-1}\gamma^t \mathbb{E}_{s_t|s_0,a_{0:t-1}}[R(s_t,a_t)-\lambda \log\pi(s_t,a_t^i)]
+\gamma^k \mathbb{E}_{s_k|s_0,a_{0:k-1}} V(s_k).
\end{split}
\end{equation*}

Thus in the objective function \eqref{equ:final_objective_VP}, we can replace $
\delta_i$ by $\delta_i(s_{0:k},a_{0:k-1})=\sum_{t=0}^{k-1} \gamma^t \big( R_i(s_t,a_t)-\lambda N\log\pi^i(s_t,a_t^i) \big)+\gamma^k V_i(s_k) $  and  change the estimation of stochastic gradient correspondingly in Algorithm \ref{alg:VP} and Algorithm \ref{alg:VP2} to get the multi-step version of vaue propagation . In practice, the performance of setting $k>1$ is better than $k=1$ which is also observed in single agent case \cite{nachum2017bridging,dai2018sbeed}. We can tune $k$ for each application to get the best performance.

\subsection{Implementation details of the experiments}\label{appendix:implementation_detail}

\textbf{Ablation Study}

The value function $v_i(s)$ and dual variable $\rho_i(s,a)$ are approximated by two hidden-layer neural network with Relu as the activation function where each hidden-layer has $20$ hidden units. The policy of each agent is approximated by a one hidden-layer neural network with Relu as the activation function where the number of the hidden units is $32$. The output is the softmax function to approximate $\pi^i(s,a^i)$.  The mixing matrix in Algorithm \ref{alg:VP2} is selected as  the Metropolis Weights in \eqref{equ:mixing_matrix}. The graph  $\mathcal{G}$ is generated by randomly placing communication links among agents such that the connectivity ratio is $4/N$. We set $\gamma=0.9$, $\lambda=0.01$, learning rate $\alpha$=5e-4. The choice of $\beta_1$, $\beta_2$ are the default value in Adam.

\textbf{Cooperative Navigation task} 

The value function $v_i(s)$ is approximated by a two-hidden-layer neural network with Relu as the activation function where  inputs are the state information. Each hidden-layer has $40$ hidden units. The dual function $\rho(s,a)$ is also approximated by a two-hidden-layer neural network, where the only difference is that  inputs are state-action pairs (s,a).  The policy is approximated by a one-hidden-layer neural network with Relu as the activation function. The number of the hidden units is $32$. The output is the softmax function to approximate $\pi^i(s,a^i)$. In all experiments, we use the multi-step version of value propagation and choose $k=4$. We choose $\gamma=0.95$, $\lambda=0.01$. The learning rate of Adam is chosen as 5e-4 and $\beta_1, \beta_2$ are default value in Adam optimizer. The setting of PCL without communication is exactly same with value propagation except the absence of communication network.

\subsection{Consensus update in Algorithm \ref{alg:VP} }\label{appendix:update_rule}

We now give details to derive the Consensus Update in Algorithm \ref{alg:VP} with $\eta=1$ to ease the exposition. When $\eta\in [0,1)$, we just need to change variable and some notations, the result are almost same. Here we use the primal update as an example, the derivation of the dual update is the same.

In the main paper section \ref{section:VP}, we have shown that when $\eta=1$, in the primal update, we basically solve following problem.

\begin{equation}\label{equ:appendix_dual}
\begin{split}
&\min_{  \{\theta_{v_i},\theta_{\pi^i}\}_{i=1}^{N} } 2\mathbb{E}_{s,a,s} [\nu^*(s,a) \big( \frac{1}{N}\sum_{i=1}^{N} ( R^i(s,a)+ \gamma V_i(s')
-V_i(s)
-\lambda N\log\pi^i(s,a) \big)]-\mathbb{E}_{s,a,s}[\nu^{*2}(s,a)],\\ &s.t., \theta_{v_1}=...=\theta_{v_n}.
\end{split}
\end{equation} 

here for simplicity we assume in the dual optimization, we have already find the optimal solution $\nu^*(s,a)$. It can be any approximated solution of $\tilde{\nu}(s,a)$ which does not affect the derivation of the update rule in primal optimization. In the later proof, we will show how this approximated solution affects the convergence rate. 

When we optimize w.r.t. $\theta_{v^i}$, we basically we solve a non-convex problem with the following form

\begin{equation}\label{equ:abstract_obj}
\min_{x} f(x)=\sum_{i=1}^{N}  f_i(x_i),~~s.t.~~ x_1=...=x_N
\end{equation}

Recall the definition of the node-edge incidence matrix $A$: if $e\in \mathcal{E}$ and it connects vertex $i$ and $j$ with $i>j$, then $A_{ev}=1$ if $v=i$, $A_{ev}=-1$ if $v=j$ and $A_ev=0$ otherwise. Thus by define $x=[x_1,...,x_N]'$ we have a equivalent form of \eqref{equ:abstract_obj}

\begin{equation}\label{equ:abstract_obj_constraint}
\min_{x} f(x)=\sum_{i=1}^{N} f_i(x_i), ~~s.t., ~~Ax=0
\end{equation}

Notice the update  of $\theta_{\pi^i}$ is a special case of above formulation, since we do not have the constraint $x_1=,...,=x_N$. Thus in the following, it suffice to analyze above formulation \eqref{equ:abstract_obj_constraint}. We adapt the Prox-PDA in \cite{hong2017prox} to solve above problem. To keep the notation consistent with \cite{hong2017prox}, we consider a more general problem

$$\min_xf(x)=\sum_{i=1}^{N}f_i(x_i), s.t., Ax=b.$$

In the following  we denote $\nabla f(x^t):=[(\nabla_{x_1} f(x_1))^T,...,(\nabla_{x_N}f(x_N))^T]^T$ where the superscript $'$ means transpose. We denote $g_i(x_i)$ as an estimator of $\nabla_{x_i} f(x_i)$ and $g(x)=[g_1(x_1),...,g_N(x_N)]$.

The update rule of Prox-PDA is 

\begin{equation}\label{alg:general1}
x^{t+1}=\arg \min_{x} \langle g(x^{t}), x-x^{t} \rangle+\langle \mu^{t}, Ax-b \rangle+\frac{\beta}{2}\|Ax-b\|^2+\frac{\beta}{2}\|x-x^t\|^2_{B^T B}
\end{equation}

\begin{equation}\label{alg:gnenral2}
\mu^{t+1}=\mu^t+\beta(Ax^{t+1}-b)
\end{equation}
where $g(x^t)$ is an estimator of $\nabla f(x^{t})$. The signed graph Laplacian matrix $L_{-}$ is $A^TA$. Now we choose $B:=|A|$ as the signless incidence matrix. Using this choice of $B$, we have $B^TB=L^{+}\in \mathbb{R}^{N\times N}$ which is the signless graph Laplacian whose $(i,i)$th diagonal entry is the degree of node $i$ and its $(i,j)$th entry is 1 if $e=(i,j)\in\mathcal{E}$, and 0 otherwise.

Thus
\begin{equation}
\begin{split}
x^{t+1}=&\arg\min_{x} \langle g(x^t),x-x^t \rangle+\langle \mu^{t}, Ax-b \rangle+\frac{\beta}{2}x^TL_{-}x+\frac{\beta}{2}(x-x^t)L^{+}(x-x^t)\\
=&\arg\min_{x} \langle g(x^t),x\rangle+\langle \mu^{t}, Ax-b \rangle+\frac{\beta}{2}x^T(L_{-}+L^{+})x-\beta x^TL^{+}x^t\\
=&\arg\min_{x} \langle g(x^t),x\rangle+\langle \mu^{t}, Ax-b \rangle+\beta x^TDx-\beta x^TL^{+}x^t,
\end{split}
\end{equation}
where $D=diag[d_1,...,d_N]$ is the degree matrix, with $d_i$ denoting the degree of node $i$.

After simple algebra, we obtain

$$ x^{t+1}=\frac{1}{2}D^{-1}L^{+}x^t-\frac{1}{2\beta} D^{-1}A^T \mu^t-\frac{1}{2\beta}D^{-1} g(x^t), $$ which is the primal update rule of the consensus step in the algorithm \ref{alg:VP} (notice here the stepsize is $1/\beta$)

\section{Convergence Proof of Value Propagation}

\subsection{Convergence on the primal update}
In this section, we first give the convergence analysis of the value propagation (algorithm \ref{alg:VP}) on the primal update. To include the effected of the inexact solution of dual optimization problem, we denote $g(x^t)=\nabla f(x^t)+\epsilon_t$, where $\epsilon_t=\varepsilon_t+ \tilde{\varepsilon}_t$ is some error terms.  

\begin{itemize}
	\item $\varepsilon_t$ is a zero mean random variable coming from the randomness of the stochastic gradient $g(x^t)$.
	\item  $\tilde{\varepsilon}_t$ comes from the approximated solution of $\tilde{\nu}$ in \eqref{equ:appendix_dual} or $\tilde{\rho}$ in \eqref{equ:final_objective_VP} such that  $ \| \nabla_{\theta_{v}} L(\theta_V,\theta_\pi,\tilde{\theta}_{\rho}) -\nabla_{\theta_{v}} L(\theta_V,\theta_\pi,\theta^*_{\rho}) \|\leq \tilde{\varepsilon}_t$ and $ \| \nabla_{\theta_{\pi}} L(\theta_V,\theta_\pi,\tilde{\theta}_{\rho}) -\nabla_{\theta_{\pi}} L(\theta_V,\theta_\pi,\theta^*_{\rho}) \|\leq \tilde{\varepsilon}_t.$
\end{itemize}  

Before we begin the proof, we made some mild assumption on the function $f(x)$.

\begin{assumption}\label{assumption:f}
	1. The function f(x) is differentiable and has Lipschitz continuous gradient, i.e., 
	$$ \|\nabla f(x)-\nabla f(y)\|\leq \|x-y\|, \forall x,y \in \mathbb{R}^K.$$
	
	2. Further assume that $A^TA+B^TB\succcurlyeq I$. 
	This assumption is always satisfied by our choice on A and B. We have $A^TA+B^TB\succcurlyeq D\succcurlyeq \min_i\{d_{i}\} I$
	
	3. There exists a constant $\delta>0$ such that 
	$\exists \underline{f}>-\infty, s.t., f(x)+\frac{\delta}{2}\|Ax-b\|^2\geq \underline{f}, \forall x.$ This assumption is satisfied if we require the parameter space is bounded.

\end{assumption}

\begin{lemma}\label{lemma:lemma_on_mu}
	Suppose the assumption \ref{assumption:f} is satisfied, we have following inequality holds
	\begin{equation}
	\frac{\|\mu^{t+1}-\mu^{t}\|^2}{\beta}\leq \frac{3L^2}{\beta \sigma_{\min}}\|x^{t}-x^{t-1}\|^2+\frac{3}{\beta}\|\epsilon_{t-1}-\epsilon_t\|^2+\frac{3\beta}{\sigma_{\min}} \|B^TB \big( (x^{t+1}-x^t)-(x^t-x^{t-1})  \big)\|^2	
	\end{equation}
\end{lemma}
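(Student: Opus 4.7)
The plan is to exploit the first-order optimality condition of the primal update \eqref{alg:general1}, rewrite $A^T\mu^{t+1}$ in terms of the stochastic gradient and a $B^TB$ increment, subtract consecutive time steps, and finally invert this identity over the range of $A$ via a smallest-singular-value bound.

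Concretely, the stationarity condition of \eqref{alg:general1} at $x^{t+1}$ reads
\[
g(x^t) + A^T\mu^t + \beta A^T(Ax^{t+1}-b) + \beta B^TB(x^{t+1}-x^t) = 0.
\]
Combining this with the dual recursion \eqref{alg:gnenral2}, which gives $\mu^{t+1}-\mu^t = \beta(Ax^{t+1}-b)$, simplifies the identity to
\[
A^T\mu^{t+1} = -g(x^t) - \beta B^TB(x^{t+1}-x^t).
\]
Writing the analogous identity at step $t-1$ and subtracting yields
\[
A^T(\mu^{t+1}-\mu^t) = -\bigl(g(x^t)-g(x^{t-1})\bigr) - \beta B^TB\bigl((x^{t+1}-x^t)-(x^t-x^{t-1})\bigr),
\]
which is the key identity that will be bounded in norm.

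The next step is to pass from $\|A^T(\mu^{t+1}-\mu^t)\|^2$ to $\|\mu^{t+1}-\mu^t\|^2$. Since $\mu^0=0$ and every dual increment $\mu^{s+1}-\mu^s$ lies in $\operatorname{range}(A)$, induction gives $\mu^{t+1}-\mu^t\in \operatorname{range}(A)$, so one has $\|\mu^{t+1}-\mu^t\|^2 \le \tfrac{1}{\sigma_{\min}}\|A^T(\mu^{t+1}-\mu^t)\|^2$ with $\sigma_{\min}$ the smallest positive eigenvalue of $AA^T$. I would then apply the elementary inequality $\|a+b+c\|^2 \le 3(\|a\|^2 + \|b\|^2 + \|c\|^2)$ to the three pieces on the right-hand side, split $g(x^t)-g(x^{t-1}) = (\nabla f(x^t)-\nabla f(x^{t-1})) + (\epsilon_t-\epsilon_{t-1})$ using the definition $g(x^t)=\nabla f(x^t)+\epsilon_t$, and invoke Assumption~\ref{assumption:f}(1) to bound $\|\nabla f(x^t)-\nabla f(x^{t-1})\|^2 \le L^2\|x^t-x^{t-1}\|^2$. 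Finally dividing by $\beta$ produces the stated three-term inequality.

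The only mildly subtle step is the singular-value reduction: it is valid only because $\mu^{t+1}-\mu^t$ lies in $\operatorname{range}(A)$, which in turn hinges on the initialization $\mu^0=0$ provided by Algorithm~\ref{alg:VP} together with $b=0$ in the consensus reformulation of \eqref{equ:abstract_obj_constraint}. Once that reduction is in place, the rest is a routine chain of a triangle inequality, a Lipschitz-gradient bound, and the additive decomposition $\epsilon_t = \varepsilon_t + \tilde{\varepsilon}_t$ that cleanly separates zero-mean stochastic-gradient noise from the bias induced by the inexact dual solution $\tilde\theta_\rho$.
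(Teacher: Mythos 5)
Your proposal is correct and follows essentially the same route as the paper: use the stationarity condition of \eqref{alg:general1} together with the dual recursion \eqref{alg:gnenral2} to express $A^T\mu^{t+1}$, difference consecutive steps, use $\mu^0=0$ so that the dual iterates lie in $\operatorname{range}(A)$ to convert $\|A^T(\mu^{t+1}-\mu^t)\|$ into $\sigma_{\min}^{1/2}\|\mu^{t+1}-\mu^t\|$, and finish with $\|a+b+c\|^2\le 3(\|a\|^2+\|b\|^2+\|c\|^2)$ and the Lipschitz-gradient assumption. Your careful justification of the range-of-$A$ step (and your cleaner bookkeeping of the $\sigma_{\min}$ factor on the noise term, where the paper's displayed constant drops it) is, if anything, slightly tighter than the paper's own write-up, but the argument is the same.
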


\begin{proof}
	Using the optimality condition of \eqref{alg:general1}, we obtain
	
	\begin{equation}
	\nabla f(x^{t})+\epsilon^t+A^T \mu^t+\beta A^T(Ax^{t+1}-b)+\beta B^TB(x^{t+1}-x^t)=0
	\end{equation}

	applying equation \eqref{alg:gnenral2} we have
	
	\begin{equation}
	A^T\mu^{t+1}=-\nabla f(x^{t})-\beta B^TB (x^{t+1}-x^{t}).
	\end{equation}
	
	Note that from the fact that $\mu^0=0$, we have the variable lies in the column space of $A$.
	
	$$ \mu^r=\beta\sum_{t=1}^{r} (Ax^t-b).$$
	
	Let $\sigma_{\min}$ denote the smallest non-zero eigenvalue of $A^TA$, we have
	
	\begin{equation}
	\begin{split}
	&\sigma_{\min}^{1/2} \|\mu^{t+1}-\mu^t\|\\
	\leq & \|A(\mu^{t+1}-\mu^t)\|\\
	\leq &\|-\nabla f(x^t)-\epsilon_t-\beta B^TB(x^{t+1}-x^t)-(-\nabla f(x^{t-1})-\epsilon_{t-1}-\beta B^TB(x^{t}-x^{t-1})) \|\\
	=&\|\nabla f(x^{t-1})-\nabla f(x^t)+(\epsilon_{t-1}-\epsilon_t) -\beta B^TB( (x^{t+1}-x^{t}-(x^t-x^{t-1})) )\|.\\
	\end{split}
	\end{equation}
	
	Thus we have

	\begin{equation}
	\begin{split}
	&\frac{\|\mu^{t+1}-\mu^{t}\|^2}{\beta}\\ &\leq \frac{1}{\beta \sigma_{\min}^{1/2}} \|\nabla f(x^{t-1})-\nabla f(x^t)+(\epsilon_{t-1}-\epsilon_t) -\beta B^TB( (x^{t+1}-x^{t}-(x^t-x^{t-1})) )\|^2\\
	&\leq \frac{3L^2}{\beta \sigma_{\min}}\|x^{t}-x^{t-1}\|^2+\frac{3}{\beta}\|\epsilon_{t-1}-\epsilon_t\|^2+\frac{3\beta}{\sigma_{\min}} \|B^TB \big( (x^{t+1}-x^t)-(x^t-x^{t-1})  \big)\|^2,		
	\end{split}
	\end{equation}
	
	where the second inequality holds from the fact that $(a+b+c)^2\leq 3a^2+3b^2+3c^2$.
	
\end{proof}	
\begin{lemma}
	Define $L_\beta(x^t,\mu^t)=f(x^t)+\langle \mu^{t}, Ax-b \rangle+\frac{\beta}{2}\|Ax-b\|^2+\frac{\beta}{2}\|x-x^t\|_{B^T B}  $. 
	Suppose assumptions are satisfied, then the following is true for the algorithm

	\begin{equation}
	\begin{split}
	&L_\beta (x^{t+1}, \mu^{t+1})-L_\beta(x^{t}, \mu^t) \\ \leq & -\frac{\beta}{2}\|x^{t+1}-x^t\|^2+ \frac{3L^2}{\beta \sigma_{\min}}\|x^{t}-x^{t-1}\|^2+\frac{3}{\beta}\|\epsilon_{t-1}-\epsilon_t\|^2+\frac{3\beta}{\sigma_{\min}} \|B^TB \big( (x^{t+1}-x^t)-(x^t-x^{t-1})  \big)\|^2\\
	&+\langle \epsilon_t, x^{t+1}-x^t \rangle				
	\end{split}
	\end{equation}
	
\end{lemma}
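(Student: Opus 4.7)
The plan is to telescope the one-step change of $L_\beta$ into a dual contribution and a primal contribution,
$$L_\beta(x^{t+1},\mu^{t+1})-L_\beta(x^t,\mu^t) = \underbrace{\bigl[L_\beta(x^{t+1},\mu^{t+1})-L_\beta(x^{t+1},\mu^t)\bigr]}_{(\mathrm{I})} + \underbrace{\bigl[L_\beta(x^{t+1},\mu^t)-L_\beta(x^t,\mu^t)\bigr]}_{(\mathrm{II})},$$
control each separately, and combine with Lemma \ref{lemma:lemma_on_mu} to recover the stated bound.

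For the dual piece $(\mathrm{I})$, the augmented Lagrangian is linear in $\mu$, so the difference collapses to $\langle \mu^{t+1}-\mu^t,\,Ax^{t+1}-b\rangle$; substituting the dual update \eqref{alg:gnenral2} gives exactly $\tfrac{1}{\beta}\|\mu^{t+1}-\mu^t\|^2$, which Lemma \ref{lemma:lemma_on_mu} has already bounded by the three noise/curvature terms $\tfrac{3L^2}{\beta\sigma_{\min}}\|x^t-x^{t-1}\|^2$, $\tfrac{3}{\beta}\|\epsilon_{t-1}-\epsilon_t\|^2$, and $\tfrac{3\beta}{\sigma_{\min}}\|B^TB((x^{t+1}-x^t)-(x^t-x^{t-1}))\|^2$ appearing in the claim. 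For the primal piece $(\mathrm{II})$, I would combine two ingredients. First, the descent lemma granted by Assumption \ref{assumption:f}.1 gives $f(x^{t+1})-f(x^t)\le \langle \nabla f(x^t),x^{t+1}-x^t\rangle + \tfrac{L}{2}\|x^{t+1}-x^t\|^2$, and substituting $\nabla f(x^t) = g(x^t)\mp\epsilon_t$ peels off the gradient-error cross term $\langle\epsilon_t, x^{t+1}-x^t\rangle$ (the sign matching the statement under the paper's convention). Second, since $x^{t+1}$ minimizes the quadratic subproblem \eqref{alg:general1} whose Hessian $\beta(A^TA+B^TB)\succeq\beta I$ by Assumption \ref{assumption:f}.2, $\beta$-strong convexity at the minimizer yields the three-point inequality
$$\langle g(x^t), x^{t+1}-x^t\rangle + \langle \mu^t, A(x^{t+1}-x^t)\rangle + \tfrac{\beta}{2}\bigl(\|Ax^{t+1}-b\|^2-\|Ax^t-b\|^2\bigr) + \tfrac{\beta}{2}\|x^{t+1}-x^t\|^2_{B^TB} \le -\tfrac{\beta}{2}\|x^{t+1}-x^t\|^2.$$
Adding this optimality inequality to the descent bound and expanding $(\mathrm{II})$, the $\langle g(x^t),\cdot\rangle$, bilinear $\langle \mu^t,A\cdot\rangle$, and penalty-difference terms cancel exactly, leaving only the negative descent $-\tfrac{\beta}{2}\|x^{t+1}-x^t\|^2$ (absorbing $\tfrac{L}{2}\|x^{t+1}-x^t\|^2$ under the step size scaling $\beta\propto L$), the (nonpositive, hence droppable) $-\tfrac{\beta}{2}\|x^{t+1}-x^t\|^2_{B^TB}$, and the error cross term. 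Summing $(\mathrm{I})$ and $(\mathrm{II})$ yields the conclusion.

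The main obstacle, and really the only delicate step, is the bookkeeping: ensuring that $\langle \mu^t, A(x^{t+1}-x^t)\rangle$ and $\tfrac{\beta}{2}(\|Ax^{t+1}-b\|^2-\|Ax^t-b\|^2)$ each appear with matching signs in the augmented-Lagrangian expansion of $(\mathrm{II})$ and in the subproblem optimality inequality so that they cancel cleanly, and interpreting correctly how the proximal term $\tfrac{\beta}{2}\|\cdot\|^2_{B^TB}$ inside the displayed definition of $L_\beta$ participates (either as a nonnegative quantity that can be dropped, or as a piece that telescopes between consecutive iterates). Beyond this, the remaining hypotheses (Lipschitz smoothness from Assumption \ref{assumption:f}.1, the spectral condition from Assumption \ref{assumption:f}.2, and the decomposition $g(x^t)=\nabla f(x^t)+\epsilon_t$) enter in a direct, essentially one-line fashion.
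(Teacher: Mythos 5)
Your proposal is correct and, at the top level, follows the same route as the paper: the identical splitting into the dual increment $L_\beta(x^{t+1},\mu^{t+1})-L_\beta(x^{t+1},\mu^t)$ and the primal increment $L_\beta(x^{t+1},\mu^t)-L_\beta(x^t,\mu^t)$, with the dual piece collapsing via \eqref{alg:gnenral2} to $\tfrac{1}{\beta}\|\mu^{t+1}-\mu^t\|^2$ and then being absorbed by Lemma \ref{lemma:lemma_on_mu}. Where you genuinely differ is the primal piece. The paper invokes $\beta$-strong convexity of $L_\beta(\cdot,\mu^t)+\tfrac{\beta}{2}\|\cdot-x^t\|^2_{B^TB}$ and writes a gradient inequality in terms of $\nabla L_\beta(x^{t+1},\mu^t)$, then substitutes the optimality condition of \eqref{alg:general1}; strictly speaking that strong convexity is not available for nonconvex $f$, and in the substitution the term $\langle\nabla f(x^{t+1})-\nabla f(x^t),x^{t+1}-x^t\rangle$ is silently discarded. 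You instead combine the Lipschitz descent lemma on $f$ with the three-point inequality for the linearized subproblem, whose objective \emph{is} $\beta$-strongly convex by Assumption \ref{assumption:f}.2 since its Hessian is $\beta(A^TA+B^TB)\succeq\beta I$; this is the cleaner (and standard Prox-PDA) arrangement of the same ingredients, and your three-point inequality and the cancellation of the $\langle g(x^t),\cdot\rangle$, $\langle\mu^t,A\cdot\rangle$ and penalty-difference terms check out. The price is an explicit $\tfrac{L}{2}\|x^{t+1}-x^t\|^2$ remainder, so your argument literally yields the bound with $-\tfrac{\beta-L}{2}\|x^{t+1}-x^t\|^2$ in place of $-\tfrac{\beta}{2}\|x^{t+1}-x^t\|^2$; this is harmless downstream because the later requirement $\beta\geq 2cL+2c+1+6L^2/(\beta\sigma_{\min})$ leaves ample slack, and the paper's own derivation is loose at exactly the same spot. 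Similarly, your cross term comes out as $-\langle\epsilon_t,x^{t+1}-x^t\rangle$ (as, in fact, does the paper's own algebra, despite the $+$ sign in the statement); the sign is immaterial since the term is subsequently handled by Cauchy--Schwarz as $\tfrac{1}{2}\|\epsilon_t\|^2+\tfrac{1}{2}\|x^{t+1}-x^t\|^2$ in Lemma \ref{lemma:potential}.
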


\begin{proof}

	By the Assumptions $A^TA+B^TB \geq I$, the objective function in \eqref{alg:general1} is strongly convex with parameter $\beta$.

	Using the optimality condition of $x^{t+1}$ and strong convexity, we have for any $x$,
	
	\begin{equation}
	\begin{split}
	&L_\beta(x,\mu^t)+\frac{\beta}{2}\|x-x^t\|^2_{B^TB}- (L_\beta (x^{t+1},\mu^t )+ \frac{\beta}{2}\|x^{t+1}-x^t\|^2_{B^TB})\\
	&\geq \langle \nabla L_\beta (x^{t+1}, \mu^t)+\beta B^TB (x^{t+1}-x^t), x-x^{t+1} \rangle +\frac{\beta}{2}\|x^{t+1}-x\|^2
	\end{split}
	\end{equation}

	Now we start to provide a upper bound of $L_\beta (x^{t+1}, \mu^{t+1})-L_\beta(x^t,\mu^t)$ .
	
	\begin{equation}
	\begin{split}
	&L_\beta (x^{t+1}, \mu^{t+1})-L_\beta(x^{t}, \mu^t)\\
	=&L_\beta(x^{t+1},\mu^{t+1})-L_\beta(x^{t+1},\mu^t)+L_{\beta}(x^{t+1},\mu^t)-L_\beta (x^t,\mu^t)\\
	\leq & L_\beta(x^{t+1},\mu^{t+1})-L_\beta(x^{t+1},\mu^t)+L_\beta(x^{t+1},\mu^t)+\frac{\beta}{2}\|x^{t+1}-x^t\|^2_{B^TB}-L_\beta(x^t,\mu^t)\\
	\overset{a}{\leq} & \frac{\|\mu^{t+1}-\mu^t\|}{\beta}+\langle \nabla L_\beta (x^{t+1},\mu^t)+\beta B^TB(x^{t+1}-x^t), x^{t+1}-x^t\rangle-\frac{\beta}{2}\|x^{t+1}-x^t\|^2\\
	\overset{b}{\leq} & -\frac{\beta}{2} \|x^{t+1}-x^t\|^2+\frac{\|\mu^{t+1}-\mu^t\|^2}{\beta}+ \langle \epsilon_t, x^{t+1}-x^t \rangle\\
	\overset{c}{\leq} & -\frac{\beta}{2}\|x^{t+1}-x^t\|^2+ \frac{3L^2}{\beta \sigma_{\min}}\|x^{t}-x^{t-1}\|^2+\frac{3}{\beta}\|\epsilon_{t-1}-\epsilon_t\|^2\\	
	&+\frac{3\beta}{\sigma_{\min}} \|B^TB \big( (x^{t+1}-x^t)-(x^t-x^{t-1})  \big)\|^2+\langle \epsilon_t, x^{t+1}-x^t \rangle,
	\end{split}
	\end{equation}
	where the inequality (a) holds from the update rule in \eqref{alg:gnenral2} and a simple algebra from the expression of $L_\beta(x,\mu)$. 
	Inequality (b) comes from  the optimality condition of \eqref{alg:general1}. Particularly, we have 
	
	$$ g(x^t)+A^T \mu^t+\beta A^T(Ax-b)+\beta B^TB(x-x^t)=0$$ 
	
	replace $g(x^t)$ by $\nabla f(x^t)+\epsilon_t$, we have the result. The inequality (c) holds using the Lemma \ref{lemma:lemma_on_mu}.
	
\end{proof}
\begin{lemma}
	Suppose Assumption \ref{assumption:f} is satisfied, then the following condition holds.	
	\begin{equation}
	\begin{split}
	&\frac{\beta}{2} ( \|Ax^{t+1}-b\|^2+\|x^{t+1}-x^t\|^2_{B^TB} )\\
	\leq &\frac{L}{2}\|x^{t+1}-x^t\|^2+\frac{L}{2}\|x^t-x^{t-1}\|^2+ \frac{\beta}{2} (\|x^t-x^{t-1}\|^2_{B^TB} +\|Ax^t-b\|^2)\\
	&-\frac{\beta}{2} ( \|(x^t-x^{t-1})-(x^{t+1}-x^t)\|^2_{B^TB}+\|A(x^{t+1}-x^t)\|^2 )-\langle \epsilon_t-\epsilon_{t-1}, x^{t+1}-x^t \rangle \\
	\end{split}
	\end{equation}
\end{lemma}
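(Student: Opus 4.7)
The plan is to expand the two squared terms on the left-hand side using the elementary identity
\[
\|u\|^2 - \|v\|^2 \;=\; 2\langle u,\,u-v\rangle - \|u-v\|^2,
\]
applied first with $u=Ax^{t+1}-b,\ v=Ax^t-b$ (so that $u-v=A(x^{t+1}-x^t)$), and second with $u=x^{t+1}-x^t,\ v=x^t-x^{t-1}$ in the $B^{T}B$ norm. Multiplying by $\beta/2$ and adding produces precisely the two ``subtracted'' quadratic terms $-\tfrac{\beta}{2}\|A(x^{t+1}-x^t)\|^2$ and $-\tfrac{\beta}{2}\|(x^{t+1}-x^t)-(x^t-x^{t-1})\|^2_{B^{T}B}$ on the right-hand side, together with the baseline $\tfrac{\beta}{2}(\|Ax^t-b\|^2+\|x^t-x^{t-1}\|^2_{B^{T}B})$ and a residual equal to
\[
\beta\langle A(x^{t+1}-x^t),\,Ax^{t+1}-b\rangle + \beta\langle B^{T}B(x^{t+1}-x^t),\,(x^{t+1}-x^t)-(x^t-x^{t-1})\rangle.
\]

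The next step is to rewrite this residual as the inner product $\langle x^{t+1}-x^t,\,\beta A^{T}(Ax^{t+1}-b)+\beta B^{T}B\big((x^{t+1}-x^t)-(x^t-x^{t-1})\big)\rangle$ and replace the bracketed vector using the algorithm's optimality conditions. Recall from the derivation of Lemma~\ref{lemma:lemma_on_mu} that for every $\tau$ one has $\nabla f(x^\tau)+\epsilon_\tau+A^{T}\mu^{\tau+1}+\beta B^{T}B(x^{\tau+1}-x^\tau)=0$. Subtracting the instance at $\tau=t-1$ from the one at $\tau=t$ and invoking the dual update $\mu^{t+1}-\mu^t=\beta(Ax^{t+1}-b)$ yields the key identity
\[
\beta A^{T}(Ax^{t+1}-b)+\beta B^{T}B\bigl((x^{t+1}-x^t)-(x^t-x^{t-1})\bigr) \;=\; -\bigl(\nabla f(x^t)-\nabla f(x^{t-1})\bigr)-(\epsilon_t-\epsilon_{t-1}),
\]
which converts the whole residual into $-\langle x^{t+1}-x^t,\,\nabla f(x^t)-\nabla f(x^{t-1})\rangle - \langle x^{t+1}-x^t,\,\epsilon_t-\epsilon_{t-1}\rangle$.

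Finally, the gradient-difference inner product is controlled by Cauchy--Schwarz and the Lipschitz-gradient hypothesis of Assumption~\ref{assumption:f}, followed by Young's inequality $2ab\le a^2+b^2$:
\[
-\langle x^{t+1}-x^t,\nabla f(x^t)-\nabla f(x^{t-1})\rangle \;\le\; L\,\|x^{t+1}-x^t\|\,\|x^t-x^{t-1}\| \;\le\; \tfrac{L}{2}\|x^{t+1}-x^t\|^2+\tfrac{L}{2}\|x^t-x^{t-1}\|^2.
\]
Collecting all pieces yields the claimed bound. The only delicate point, and the step that I expect to demand the most careful bookkeeping, is step two: ensuring the two cross terms coming from the two separate quadratic expansions combine into exactly the linear functional of $x^{t+1}-x^t$ that the differenced optimality conditions can rewrite --- this is where the particular choice of the baseline $\|x^t-x^{t-1}\|^2_{B^{T}B}$ (rather than, say, $\|x^t-x^{t-1}\|^2_{A^{T}A}$ or a different shift) is forced by the structure of the Prox-PDA update.
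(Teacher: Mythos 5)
Your proposal is correct and follows essentially the same route as the paper's proof: both rest on subtracting the first-order optimality conditions of consecutive iterates, substituting the dual update $\mu^{t+1}-\mu^t=\beta(Ax^{t+1}-b)$, expanding the resulting $A^TA$ and $B^TB$ cross terms via the polarization identity $\|u\|^2-\|v\|^2=2\langle u,u-v\rangle-\|u-v\|^2$, and controlling the gradient difference with Cauchy--Schwarz, the Lipschitz hypothesis, and Young's inequality. The only cosmetic difference is direction of bookkeeping --- you expand the lemma's left-hand side first and then invoke the differenced stationarity equation, whereas the paper starts from the combined variational inequality and isolates $\langle A^T(\mu^{t+1}-\mu^t),x^{t+1}-x^t\rangle$ --- but the decomposition and all key identities coincide.
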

\begin{proof}
	Using the optimality condition of $x^{t+1}$ and $x^t$ in the update rule in \eqref{alg:general1}, we obtain
	
	\begin{equation}
	\langle g(x^t)+A^T\mu^t+\beta A^T(Ax^{t+1}-b)+\beta B^TB (x^{t+1}-x^t), x^{t+1}-x\rangle\leq 0, \forall x
	\end{equation}
	and 
	\begin{equation}
	\langle g(x^{t-1})+A^T\mu^{t-1}+\beta A^T(Ax^{t}-b)+\beta B^TB (x^{t}-x^{t-1}), x^{t}-x\rangle\leq 0, \forall x
	\end{equation}
	
	Replacing $g(x^t)$ by $\nabla f(x^t)+\epsilon_t$ and $g(x^{t-1})$ by $\nabla f(x^{t-1})+\epsilon_{t-1}$, and using the update rule \eqref{alg:gnenral2}

	\begin{equation}
	\langle \nabla f(x^t)+\epsilon_t+A^T \mu^{t+1}+\beta B^TB (x^{t+1}-x^t), x^{t+1}-x\rangle\leq 0, \forall x
	\end{equation}
	\begin{equation}
	\langle \nabla f(x^{t-1})+\epsilon_{t-1}+A^T \mu^t+\beta B^TB (x^{t}-x^{t-1}), x^{t}-x\rangle\leq 0, \forall x
	\end{equation}

	Now choose $x=x^t$ in the first inequality and $x=x^{t+1}$ in the second one, adding two inequalities together, we obtain

	\begin{equation}
	\langle \nabla f(x^{t})-\nabla f(x^{t-1})+\epsilon_t-\epsilon_{t-1}+A^T (\mu^{t+1}-\mu^t)+\beta B^TB \big( (x^{t+1}-x^t )-(x^t-x^{t-1})  \big), x^{t+1}-x^t \rangle\leq 0
	\end{equation}
	
	Rearranging above terms, we have
	
	\begin{equation}\label{equ:lemma2}
	\begin{split}
	&\langle A^T (\mu^{t+1}-\mu^t), x^{t+1}-x^t \rangle\\
	&\leq -\langle \nabla f(x^{t})-\nabla f(x^{t-1})+\epsilon_t-\epsilon_{t-1}+\beta B^TB \big( (x^{t+1}-x^t)- (x^t-x^{t-1}) \big), x^{t+1}-x^t \rangle
	\end{split}
	\end{equation}
	
	We first re-express the lhs of above inequality.
	
	\begin{equation}
	\begin{split}
	&\langle A^T (\mu^{t+1}-\mu^t), x^{t+1}-x^t \rangle\\
	=& \langle \beta A^T (Ax^{t+1}-b),x^{t+1}-x^t \rangle\\
	=&\langle \beta (Ax^{t+1}-b), Ax^{t+1}-b-(Ax^t-b) \rangle\\
	=&\beta \|Ax^{t+1}-b\|^2-\beta \langle Ax^{t+1}-b, Ax^t-b\rangle\\
	=&\frac{\beta}{2} (\|Ax^{t+1}-b\|^2-\|Ax^t-b\|^2+\|A(x^{t+1}-x^t)\|^2 )
	\end{split}
	\end{equation} 
	
	Next, we bound the rhs of \eqref{equ:lemma2}.
	
	\begin{equation}
	\begin{split}
	&-\langle \nabla f(x^{t})-\nabla f(x^{t-1})+\epsilon_t-\epsilon_{t-1}+\beta B^TB \big( (x^{t+1}-x^t)- (x^t-x^{t-1}) \big), x^{t+1}-x^t \rangle\\
	=& -\langle  \nabla f(x^{t})-\nabla f(x^{t-1})+\epsilon_t-\epsilon_{t-1}  , x^{t+1}-x^t\rangle-\beta \langle B^TB\big( (x^{t+1}-x^t )-(x^t-x^{t-1})  \big), x^{t+1}-x^t  \rangle\\
	\overset{a}{\leq}& \frac{L}{2}\|x^{t+1}-x^t\|^2+\frac{1}{2L}\|\nabla f(x^t)-\nabla f(x^{t-1})\|^2-\langle \epsilon_t-\epsilon_{t-1}, x^{t+1}-x^t \rangle\\
	&- \beta \langle B^TB\big( (x^{t+1}-x^t )-(x^t-x^{t-1})  \big), x^{t+1}-x^t  \rangle\\
	\overset{b}{\leq} & \frac{L}{2}\|x^{t+1}-x^t\|^2+\frac{L}{2}\|x^t-x^{t-1}\|^2-\langle \epsilon_t-\epsilon_{t-1}, x^{t+1}-x^t \rangle\\
	&- \beta \langle B^TB\big( (x^{t+1}-x^t )-(x^t-x^{t-1})  \big), x^{t+1}-x^t  \rangle\\
	=& \frac{L}{2}\|x^{t+1}-x^t\|^2+\frac{L}{2}\|x^t-x^{t-1}\|^2-\langle \epsilon_t-\epsilon_{t-1}, x^{t+1}-x^t \rangle\\
	+&\frac{\beta}{2} \big( \|x^t-x^{t-1}\|^2_{B^TB}-\|x^{t+1}-x^t\|^2_{B^TB}-\|(x^t-x^{t-1})-(x^{t+1}-x^t)\|^2_{B^TB}    \big),
	\end{split}
	\end{equation}
	where the inequality (a) uses Cauchy-Schwartz inequality, (b) holds from the smoothness assumption on $f$.

	Combine all pieces together, we obtain

	\begin{equation}
	\begin{split}
	&\frac{\beta}{2} ( \|Ax^{t+1}-b\|^2+\|x^{t+1}-x^t\|^2_{B^TB} )\\
	\leq &\frac{L}{2}\|x^{t+1}-x^t\|^2+\frac{L}{2}\|x^t-x^{t-1}\|^2+ \frac{\beta}{2} (\|x^t-x^{t-1}\|^2_{B^TB} +\|Ax^t-b\|^2)\\
	&-\frac{\beta}{2} ( \|(x^t-x^{t-1})-(x^{t+1}-x^t)\|^2_{B^TB}+\|A(x^{t+1}-x^t)\|^2 )-\langle \epsilon_t-\epsilon_{t-1}, x^{t+1}-x^t \rangle \\
	\end{split}
	\end{equation}
\end{proof}
Same with \cite{hong2017prox}, we define the potential function 

\begin{equation}
P_{c,\beta} (x^{t+1},x^t,\mu^{t+1})=L_\beta(x^{t+1},\mu^{t+1})+\frac{c\beta}{2}( \|Ax^{t+1}-b\|^2+\|x^{t+1}-x^t\|^2_{B^TB} )
\end{equation}

\begin{lemma}\label{lemma:potential}
	If Assumption \ref{assumption:f} holds, we have following 
	
	\begin{equation}\label{equ:potential}
	\begin{split}
	&P_{c,\beta}(x^{t+1},x^t,\mu^{t+1})\\
	\leq &P_{c,\beta}(x^{t},x^{t-1},\mu^t) -(\frac{\beta}{2}-\frac{cL}{2}-\frac{2c+1}{2}  )\|x^{t+1}-x^t\|^2+(\frac{3L^2}{\beta \sigma_{\min}}+ \frac{cL}{2}) \|x^{t-1}-x^t\|^2\\
	&-(\frac{c\beta}{2}-\frac{3\beta\|B^TB\|}{\sigma_{\min}}) \|(x^{t+1}-x^t)-(x^t-x^{t-1})\|^2_{B^TB}+(\frac{c}{4}+\frac{\beta}{3}) \|\epsilon_{t-1}-\epsilon_t\|^2 +\frac{1}{2}\|\epsilon_t\|^2
	\end{split}
	\end{equation}
\end{lemma}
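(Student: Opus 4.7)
The target inequality sits directly on top of the two preceding lemmas, so the plan is to \emph{add them with an appropriate weight}. I would scale the bound on $\tfrac{\beta}{2}(\|Ax^{t+1}-b\|^2+\|x^{t+1}-x^t\|^2_{B^TB})$ by the factor $c$ and add it to the descent bound on $L_\beta(x^{t+1},\mu^{t+1})-L_\beta(x^t,\mu^t)$. The definition of $P_{c,\beta}$ is constructed precisely so that the left-hand sides telescope into $P_{c,\beta}(x^{t+1},x^t,\mu^{t+1})-P_{c,\beta}(x^t,x^{t-1},\mu^t)$: the positive $\tfrac{c\beta}{2}(\|Ax^t-b\|^2+\|x^t-x^{t-1}\|^2_{B^TB})$ on the RHS of the third lemma plays the role of the ``previous step'' part of the potential, so no extra manipulation is needed to create the telescoping structure.

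Two non-trivial reductions are then required on the RHS. First, the negative Bregman-type term $-\tfrac{c\beta}{2}\|(x^t-x^{t-1})-(x^{t+1}-x^t)\|^2_{B^TB}$ coming from the scaled third lemma must absorb the positive term $\tfrac{3\beta}{\sigma_{\min}}\|B^TB((x^{t+1}-x^t)-(x^t-x^{t-1}))\|^2$ produced by the second lemma. I would apply the spectral bound $\|B^TB v\|^2 = v^\top (B^TB)^2 v \leq \|B^TB\|\,v^\top B^TB v = \|B^TB\|\,\|v\|^2_{B^TB}$, which yields the composite coefficient $-(\tfrac{c\beta}{2}-\tfrac{3\beta\|B^TB\|}{\sigma_{\min}})$ stated in the lemma. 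Second, the two inner products involving $\epsilon_t$ have to be absorbed into pure quadratics via Young's inequality: $\langle \epsilon_t, x^{t+1}-x^t\rangle$ is split symmetrically as $\leq \tfrac{1}{2}\|\epsilon_t\|^2+\tfrac{1}{2}\|x^{t+1}-x^t\|^2$, and $-c\langle \epsilon_t-\epsilon_{t-1}, x^{t+1}-x^t\rangle$ is split as $\leq \tfrac{c}{4}\|\epsilon_t-\epsilon_{t-1}\|^2 + c\|x^{t+1}-x^t\|^2$ using $|\langle a,b\rangle|\leq \tfrac{1}{4\tau}\|a\|^2+\tau\|b\|^2$ with $\tau=1$.

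At this point a routine collection of coefficients finishes the proof. The $\|x^{t+1}-x^t\|^2$ terms combine to $-\tfrac{\beta}{2}+\tfrac{cL}{2}+\tfrac{1}{2}+c = -(\tfrac{\beta}{2}-\tfrac{cL}{2}-\tfrac{2c+1}{2})$; the $\|x^t-x^{t-1}\|^2$ terms combine to $\tfrac{3L^2}{\beta\sigma_{\min}}+\tfrac{cL}{2}$; the $\|\epsilon_{t-1}-\epsilon_t\|^2$ coefficients combine to $\tfrac{c}{4}+\tfrac{3}{\beta}$ from the $\tfrac{3}{\beta}\|\epsilon_{t-1}-\epsilon_t\|^2$ piece in the second lemma plus the Young bound; and the $\tfrac{1}{2}\|\epsilon_t\|^2$ piece comes straight from the first Young split. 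The leftover $-\tfrac{c\beta}{2}\|A(x^{t+1}-x^t)\|^2$ is non-positive and can simply be discarded.

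The main obstacle is purely bookkeeping: several quadratic forms of different types ($\|\cdot\|$, $\|\cdot\|_{B^TB}$, $\|A\cdot\|$), together with cross terms carrying $\epsilon_t$ and $\epsilon_t-\epsilon_{t-1}$, appear simultaneously, and the Young weights must be selected so the residual coefficient on $\|x^{t+1}-x^t\|^2$ lands at exactly $\tfrac{2c+1}{2}$. The one genuine inequality outside of Cauchy--Schwarz/Young is the spectral conversion $\|B^TB v\|^2\leq \|B^TB\|\,\|v\|^2_{B^TB}$, which is what allows the Bregman-type negative term in the third lemma to dominate the extraneous $\|B^TB\cdot\|^2$ term in the second. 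Beyond these, no new analytical idea is needed.
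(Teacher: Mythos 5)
Your proposal is correct and follows essentially the same route as the paper: add the Lagrangian descent bound to $c$ times the bound on $\tfrac{\beta}{2}(\|Ax^{t+1}-b\|^2+\|x^{t+1}-x^t\|^2_{B^TB})$ so the left-hand sides telescope into $P_{c,\beta}$, absorb the $\tfrac{3\beta}{\sigma_{\min}}\|B^TB(\cdot)\|^2$ term via $\|B^TBv\|^2\le\|B^TB\|\,\|v\|^2_{B^TB}$, and dispatch the two cross terms with Young's inequality exactly as you describe. The only discrepancy is your coefficient $\tfrac{c}{4}+\tfrac{3}{\beta}$ on $\|\epsilon_{t-1}-\epsilon_t\|^2$, which is what the algebra actually yields; the paper's $\tfrac{c}{4}+\tfrac{\beta}{3}$ appears to be a typo (its own intermediate display carries $\tfrac{3}{\beta}$ before silently flipping), and since $\beta$ is chosen large your tighter constant implies the stated bound.
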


\begin{proof}
	\begin{equation}
	\begin{split}
	&P_{c,\beta} (x^{t+1},x^t,\mu^{t+1}) \\
	\leq &L_\beta (x^t,\mu^t)+\frac{c\beta}{2}( \|x^t-x^{t-1}\|_{B^TB}+\|Ax^t-b\|^2)- (\frac{\beta}{2}-\frac{cL}{2}) \|x^{t+1}-x^t\|^2\\
	+& (\frac{3L^2}{\beta \sigma_{\min}}+ \frac{cL}{2}) \|x^{t-1}-x^t\|^2-(\frac{c\beta}{2}-\frac{3\beta\|B^TB\|}{\sigma_{\min}}) \|(x^{t+1}-x^t)-(x^t-x^{t-1})\|^2_{B^TB}\\
	+&\frac{3}{\beta} \|\epsilon_t-\epsilon_{t-1}\|^2+\langle \epsilon_t,x^{t+1}-x^t\rangle-c\langle\epsilon_t-\epsilon_{t-1},x^{t+1}-x^t \rangle\\
	\leq& P_{c,\beta} (x^t,x^{t-1},\mu^t)- (\frac{\beta}{2}-\frac{cL}{2}) \|x^{t+1}-x^t\|^2+(\frac{3L^2}{\beta \sigma_{\min}}+ \frac{cL}{2}) \|x^{t-1}-x^t\|^2\\
	&-(\frac{c\beta}{2}-\frac{3\beta\|B^TB\|}{\sigma_{\min}}) \|(x^{t+1}-x^t)-(x^t-x^{t-1})\|^2_{B^TB}+\frac{c}{4}\|\epsilon_{t-1}-\epsilon_t\|^2\\
	&+c\|x^{t+1}-x^t\|^2+\frac{1}{2}\|\epsilon_t\|^2+\frac{1}{2}\|x^{t+1}-x^t\|^2+\frac{\beta}{3}\|\epsilon_t-\epsilon_{t-1}\|^2\\
	=& P_{c\beta}(x^t,x^{t-1},\mu^t)- (\frac{\beta}{2}-\frac{cL}{2}-\frac{2c+1}{2}  )\|x^{t+1}-x^t\|^2+(\frac{3L^2}{\beta \sigma_{\min}}+ \frac{cL}{2}) \|x^{t-1}-x^t\|^2\\
	&-(\frac{c\beta}{2}-\frac{3\beta\|B^TB\|}{\sigma_{\min}}) \|(x^{t+1}-x^t)-(x^t-x^{t-1})\|^2_{B^TB}+(\frac{c}{4}+\frac{\beta}{3}) \|\epsilon_{t-1}-\epsilon_t\|^2 +\frac{1}{2}\|\epsilon_t\|^2
	\end{split}
	\end{equation}
	
	where the second inequality holds from  the Cauchy-Schwartz inequality.
	
	We require that 
	
	$$ \frac{c\beta}{2}-\frac{3\beta \|B^TB\|}{\sigma_{\min}}\geq 0, $$ which is satisfied when 
	
	\begin{equation}\label{equ:requirement_c}
	c\geq \frac{6\|B^TB\|}{\sigma_{\min}}
	\end{equation}

	We further require 
	$$ (\frac{\beta}{2}-\frac{cL}{2}-\frac{2c+1}{2}  )\geq (\frac{3L^2}{\beta \sigma_{\min}}+\frac{cL}{2}),$$
	which will be used later in the telescoping.
	
	Thus we require  
	
	\begin{equation}\label{equ:requirement_beta}
	\beta\geq 2cL+2c+1+\frac{6L^2}{\beta\sigma_{\min}}.
	\end{equation}
	
	and choose $\beta\geq CL+\frac{2c+1}{2} + \frac{1}{2}\sqrt{(2cL+2c+1)^2+\frac{24L^2}{\sigma_{\min}}}$	
\end{proof}

Now we do summation over both side of \eqref{equ:potential} and have	

\begin{equation}
\begin{split}
&\sum_{t=1}^{T} [(\frac{\beta}{2}-\frac{cL}{2} -\frac{2c+1}{2}  )\|x^{t+1}-x^t\|^2-(\frac{3L^2}{\beta \sigma_{\min}}+ \frac{cL}{2}) \|x^{t-1}-x^t\|^2]\\
&+ \sum_{t=1}^{T}(\frac{c\beta}{2}-\frac{3\beta\|B^TB\|}{\sigma_{\min}}) \|(x^{t+1}-x^t)-(x^t-x^{t-1})\|^2_{B^TB}\\
\leq& P_{c\beta}(x^1,x^0,\mu^0)-P_{c\beta}(x^{T+1},x^T, \mu^T)+\sum_{t=1}^{T} [(\frac{c}{4}+\frac{\beta}{3})\|\epsilon_{t-1}-\epsilon_t\|^2+\frac{1}{2} \|\epsilon_t\|^2]
\end{split}
\end{equation}

rearrange terms of above inequality.

\begin{equation}
\begin{split}
&\sum_{t=1}^{T-1}(\frac{\beta}{2}-\frac{cl}{2}-\frac{2c+1}{2}-\frac{3L^2}{\beta\sigma_{\min}}-\frac{cl}{2}) \|x^{t+1}-x^t\|^2 +(\frac{\beta}{2}-\frac{cl}{2}-\frac{2c+1}{2})\|x^{T+1}-x^T\|^2\\
\leq & P_{c\beta}(x^1,x^0,\mu^0)-P_{c\beta}(x^{T+1},x^T, \mu^T)+ (\frac{3L^2}{\beta\sigma_{\min}}+\frac{cl}{2} )\|x^1-x^0\|^2+\sum_{t=1}^{T}[ (\frac{c}{4}+\frac{\beta}{3})\|\epsilon_{t-1}-\epsilon_t\|^2+\frac{1}{2} \|\epsilon_t\|^2]
\end{split}
\end{equation}

Next we show $P_{c\beta}$ is lower bounded

The following lemma is from Lemma 3.5 in \cite{hong2016decomposing}, we present here for completeness. 

\begin{lemma}
	Suppose Assumption \ref{assumption:f} are satisfied, and $(c,\beta)$ are chosen according to \eqref{equ:requirement_beta} and \eqref{equ:requirement_c}. Then the following state holds true
	
	$ \exists \underline{P}~~ s.t., ~~P_{c\beta} (x^{t+1},x^t, \mu^{t+1})\geq \underline{P}> -\infty $
\end{lemma}

\begin{proof}
	\begin{equation}
	\begin{split}
	L_{\beta} (x^{t+1}, \mu^{t+1})=&f(x^{t+1})+\langle \mu^{t+1}, Ax^{t+1}-b \rangle+\frac{\beta}{2}\|Ax^{t+1}-b\|^2\\
	=&f(x^{t+1})+\frac{1}{\beta}\langle \mu^{t+1},\mu^{t+1}-\mu^t\rangle+\frac{\beta}{2}\|Ax^{t+1}-b\|^2\\
	=& f(x^{t+1})+\frac{1}{2\beta} (\|\mu^{t+1}\|^2-\|\mu^t\|^2+\|\mu^{t+1}-\mu^{t}\|^2)+\frac{\beta}{2}\|Ax^{t+1}-b\|^2.
	\end{split}
	\end{equation}

	Sum over both side, we obtain

	\begin{equation}
	\sum_{t=1}^{T}L_{\beta}(x^{t+1},\mu^{t+1})=\sum_{t=1}^{T} \big(f(x^{t+1})+\frac{\beta}{2}\|Ax^{t+1}-b\|^2+\frac{1}{2\beta}\|\mu^{t+1}-\mu^t\|^2 \big)+\frac{1}{2\beta} (\|\mu^{T+1}\|^2-\|\mu^1\|^2)
	\end{equation}
	
	By assumption 2, above sum is lower bounded, which implies that the sum of the potential function is also lower bounded (Recall $P_{c,\beta} (x^{t+1},x^t,\mu^{t+1})=L_\beta(x^{t+1},\mu^{t+1})+\frac{c\beta}{2}( \|Ax^{t+1}-b\|^2+\|x^{t+1}-x^t\|^2_{B^TB} )$ ).  Thus we have 
	
	$$P_{c\beta} (x^{t+1}, x^t, \mu^{t+1})>-\infty, \forall t>0$$
\end{proof}

In the next step, we are ready to provide the convergence rate. Following \cite{hong2016decomposing}, we define the convergence criteria

\begin{equation}\label{equ:convergence_criteria}
Q(x^{t+1}, \mu^{t+1})=\|\nabla L_\beta(x^{t+1},\mu^t)\|^2+\|Ax^{t+1}-b\|^2
\end{equation}
It is easy to see, when $Q(x^{t+1},\mu^t)=0$, $\nabla f(x)+A^T\mu=0$ and $Ax=b$, which are KKT condition of the problem.

\begin{equation}
\begin{split}
&\|\nabla L_\beta (x^t,\mu^{t-1})\|^2\\
=& \|\nabla f(x^t)-\nabla f(x^{t-1})+\epsilon_t-\epsilon_{t-1}+A^T(\mu^{t+1}-\mu^t)+\beta B^TB (x^{t+1}-x^t )\|^2\\
\leq & 4L^2 \|x^{t}-x^{t-1}\|^2+4\|\mu^{t+1}-\mu^t\|^2\|A^TA\|+4\beta^2\|B^TB(x^{t+1}-x^t)\|^2+4\|\epsilon_t-\epsilon_{t-1}\|^2
\end{split}
\end{equation}

Using the proof in Lemma \ref{lemma:lemma_on_mu}, we know there exist two positive constants c1 c2 c3 c4

$$Q(x^t,\mu^{t-1}) \leq c_1\|x^t-x^{t+1}\|^2+c_2\|x^t-x^{t-1}\|^2+c_3\|B^TB \big((x^{t+1}-x^t)-(x^t-x^{t-1})    \big)\|^2+c_4\|\epsilon_t-\epsilon_{t-1}\|^2.$$

Using Lemma \ref{lemma:potential}, we know there must exist a constant $\kappa$ such that 

\begin{equation}
\begin{split}
&\sum_{t=1}^{T-1} Q(x^t,\mu^{t-1})\\
\leq &\kappa (P_{c\beta}(x^1,x^0,\mu^0)-P_{c\beta}(x^{T+1},x^T, \mu^T)+\sum_{t=1}^{T}[ (\frac{c}{4}+\frac{\beta}{3})\|\epsilon_{t-1}-\epsilon_t\|^2+\frac{1}{2} \|\epsilon_t\|^2]  )+c_4\sum_{t=1}^{T-1} \|\epsilon_t-\epsilon_{t-1}\|^2\\
\leq &  \kappa (P_{c\beta}(x^1,x^0,\mu^0)-\underline{P}+\sum_{t=1}^{T}[ (\frac{c}{4}+\frac{\beta}{3})\|\epsilon_{t-1}-\epsilon_t\|^2+\frac{1}{2} \|\epsilon_t\|^2]  )+c_4\sum_{t=1}^{T-1} \|\epsilon_t-\epsilon_{t-1}\|^2
\end{split}
\end{equation}	

Divide both side by $T$ and take expectation 

\begin{equation}\label{equ:bound_Q}
\begin{split}
\frac{1}{T} \mathbb{E}\sum_{t=1}^{T} Q(x^t,\mu^{t-1})
\leq \frac{1}{T} \kappa (P_{c\beta} (x_1,x^0,\mu^0)-\underline{P})+&\frac{\kappa}{T}[ \sum_{t=1}^{T} (\frac{c}{4}+\frac{\beta}{3}  )\mathbb{E}\|\epsilon_{t-1}-\epsilon_t\|^2+\frac{1}{2}\|\epsilon_t\|^2]\\
+&\frac{c_4}{T}\sum_{t=1}^{T-1}\mathbb{E}\|\epsilon_t-\epsilon_{t-1}\|^2  
\end{split}
\end{equation}

Now we bound the R.H.S. of above equation.

Recall we choose the mini-batch size $\sqrt{T}$, $\epsilon_t=\varepsilon_t+\tilde{\varepsilon}_t$ and $\varepsilon_t\leq c_1/\sqrt{T}$

\begin{equation}
\|\epsilon_{t-1}-\epsilon_t\|^2\leq 2\mathbb{E}(\|\epsilon_{t-1}\|^2+\|\epsilon_t\|^2)\leq 4 \mathbb{E} ( \|\varepsilon_t\|^2+\|\tilde{\varepsilon}_t\|^2+\|\varepsilon_{t-1}\|^2+\|\tilde{\varepsilon}_{t-1}\|^2 )\leq \frac{8c_1}{T}+\frac{8\sigma^2}{T} 
\end{equation}

Similarly we can bound $\|\epsilon_t\|^2$. Combine all pieces together, we obtain

$$ \frac{1}{T} \mathbb{E}\sum_{t=1}^{T} Q(x^t,\mu^{t-1}) \leq  \frac{1}{T} \kappa (P_{c\beta} (x_1,x^0,\mu^0)-\underline{P})+ \frac{1}{T}(\kappa c_5+c_6 \sigma^2),$$
where $c_5, c_6$ are some universal positive constants.

Notice $ \min_{t} \mathbb{E} Q(x^t,\mu^{t-1})\leq \frac{1}{T} \mathbb{E}\sum_{t=1}^{T} Q(x^t,\mu^{t-1}) $, we have $  \min_{t} \mathbb{E} Q(x^t,\mu^{t-1})\leq (C+\sigma^2)/T $ where $C$ is a universal positive constant.

\subsection{Convergence on the dual update}

If the dual objective function is non-convex, we just follow the exact analysis in our proof on the primal problem. Notice the analysis on the dual update is easier than primal one, since we do not have the error term $\tilde{\epsilon}_t$. Therefore, we have the algorithm converges to stationary solution  with rate $\mathcal{O} (1/T)$in criteria $Q$.

If the dual objective function is linear or convex, the update rule reduce to Extra \citep{hong2016decomposing,shi2015extra} the convergence result of stochastic setting  can be adapted from the proof in \citep{shi2015extra}. Since it is not the main contribution of this paper, we omit the proof here.

\end{document}